\newcommand{\argmax}{\mathop{\rm arg~max}\limits}
\newtheorem{assumption}{Assumption}
\newtheorem{Theorem}{Theorem}
\NewDocumentEnvironment{theoremproof}{O{}}{}{}
\NewDocumentEnvironment{theoremproofref}{O{}}{}{}
\newif\ifarxivversion
\begin{document}
\title{Robustness bounds on the successful adversarial examples in probabilistic models: Implications from Gaussian processes}
%

\author{Hiroaki Maeshima\inst{1,2} \Envelope \and Akira Otsuka\inst{1}\orcidID{0000-0001-6862-2576}}
\authorrunning{H. Maeshima \and A. Otsuka}
%
\institute{Institute of Information Security, Yokohama, Japan \\
\email{mgs214505@iisec.ac.jp, otsuka@ai.iisec.ac.jp}\and
NTT TechnoCross Corporation, Yokohama, Japan }
\maketitle              
\begin{abstract}
   Adversarial example (AE) is an attack method for machine learning, which is crafted by adding imperceptible perturbation to the data inducing misclassification. In the current paper, we investigated the upper bound of the probability of successful AEs based on the Gaussian Process (GP) classification, a probabilistic inference model. We proved a new upper bound of the probability of a successful AE attack that depends on AE's perturbation norm, the kernel function used in GP, and the distance of the closest pair with different labels in the training dataset. Surprisingly, the upper bound is determined regardless of the distribution of the sample dataset. 
   We showed that our theoretical result was confirmed through the experiment using ImageNet. In addition, we showed that changing the parameters of the kernel function induces a change of the upper bound of the probability of successful AEs.

\keywords{Gaussian Processes \and Adversarial Examples \and Robustness Bounds}
\end{abstract}
\section{Introduction}
\subsection{Adversarial Example}
Today, machine learning is widely used in various fields, and concerns about its security have emerged.
An adversarial example (AE) is widely investigated among such attack methods \cite{goodfellow_explaining_2015}. AE is a sample that is slightly different from a natural sample that is misclassified by a machine learning classifier \cite{goodfellow_explaining_2015}. AE is often crafted against neural networks by adding a small perturbation (adversarial perturbation) to the original input, and various methods are proposed to make an adversarial perturbation \cite{carlini_towards_2017,goodfellow_explaining_2015}. 

Since AE is considered an attack method against machine learning, various defense methods against AE have been investigated, including the detection method \cite{feinman_detecting_2017} and adversarial training \cite{cai_curriculum_2018}.
However, Carlini \& Wagner \cite{carlini_adversarial_2017} reviewed ten detection methods and stated that no defense method could survive a white-box AE attack, an attack using the knowledge of the victim's architecture. Therefore, both the defense methods and their theoretical basis should be required for the defense against AE.

In the current study, we investigated the theoretical basis of AE using the Gaussian Process (GP), a stochastic process whose output is multivariate normally distributed. GP can be used for a probabilistic model for classification and regression \cite{rasmussen_gaussian_2006}, and previous research shows that GP is equivalent to some types of classifier such as linear regression and a relevance vector machine, and especially, GP is equivalent to a certain type of neural networks \cite{williams_computing_1996}. More specifically, the activation function in neural networks corresponds to the kernel function in the NN \cite{williams_computing_1996}.

\subsection{Related Researches}
Until now, some studies have been conducted that aim to establish the theoretical basis of AE. Shafahi, Huang, Studer, Feizi \& Goldstein \cite{shafahi_are_2019} have suggested that AE is inevitable under a certain condition on the distribution of the training data, using the geometrical method. Blaas et al. \cite{blaas_adversarial_2020} have shown the algorithmic method to calculate the upper and lower bound of the value of GP classification in an arbitrary set. Cardelli, Kwiatkowska, Laurenti \& Patane \cite{cardelli_robustness_2019} have shown the upper bound of the probability of robustness of the GP output in an arbitrary set. Wang, Jha \& Chandhuri \cite{wang_analyzing_2018} have shown the method of creating a robust dataset when using a k-nearest neighborhood classifier.
Fawzi, Fawzi \& Fawzi \cite{fawzi_adversarial_2018} have shown the fundamental upper bound on the robustness of the classification, which some studies are based on. Mahloujifar, Diochnos \& Mahmoody \cite{mahloujifar_curse_2019} have applied their results to broader types of data distributions, and Zhang, Chen, Gu \& Evans \cite{zhang_understanding_2020} have shown the intrinsic robustness bounds for classifiers with a conditional generative model.
More specifically, Gilmer et al. \cite{gilmer_relationship_2018} have shown a fundamental bound relating to the classification error rate in the field of neural networks; Bhagoji, Cullina \& Mittal \cite{bhagoji_lower_2019} have shown lower bounds of adversarial classification loss using optimal transport theory.

Some studies suggest certified defense methods. Randomized smoothing \cite{cohen_certified_2019} is a method for composing "smoothed" classifiers that are derived from arbitrary classifiers. The prediction of an arbitrary input by smoothed classifiers has a safety radius, within which the classification results of the neighborhood inputs are the same as the central input.

Our approach is different from the related studies in the following viewpoints.
First, our approach focuses on the training data for the classifier and is easier to use practically.
For example, the other studies focus on the geometry of the classifier \cite{shafahi_are_2019} or the upper bound for which some cumbersome calculation with all training data is necessary \cite{blaas_adversarial_2020,cardelli_robustness_2019}.
Using our approach, we can analyze how the robustness will change when the distribution of training data of the classifier changes, for example, when some training data are removed from the dataset or moved within the input space.
Cohen, Rosenfeld \& Kolter's randomized classifier \cite{cohen_certified_2019} needs Monte Carlo samplings to calculate the variance used in the calculation of the safety radius. However, our approach can provide the predictive variance using GP. Thus, our approach can be used with Cohen, Rosenfeld \& Kolter's randomized smoothing.

Second, our approach can be applied to a wide range of classifiers thanks to GP.
For example, Wang, Jha \& Chandhuri 's approach  \cite{wang_analyzing_2018} also focuses on how robustness changes when the distribution of training data of the classifier changes, but their approach is based on the nearest-neighbor method. Our approach is based on GP, which can be applied to broader inference models.
Smith, Grosse, Backes \& Alvarez \cite{smith_adversarial_2023} have shown adversarial vulnerability bounds for binary GP classification. However, Smith, Grosse, Backes \& Alvarez's proof is based on the Gaussian kernel, while the result of our research is based on the characteristics of the kernel functions, which includes the Gaussian kernel as a special case. Therefore, the current study has a broader application than \cite{smith_adversarial_2023}.

\subsection{Contributions}
Our research has the following contributions.
\begin{itemize}
    \item Our research shows that the success probability of an AE attack against a certain dataset in the GP classification with a certain kernel function is upper-bounded by the function of the distance of the nearest points that have different labels. 
    \item We confirmed our theoretical result through the experiment using ImageNet with various kernel parameters in GP classification. The experimental result is well-suited to the theoretical result.
    \item We showed that changing the parameters of the kernel function causes the change of the theoretical upper bound, and thus our result gives the theoretical basis for the enhancement method of robustness.

\end{itemize}

\section{Theoretical Results}

\subsection{Problem Formulation and Assumption}

Let $\mathcal{D}$ be a dataset with $N$ samples that has data points $x_i$ and labels of object classes $y_i$. The data point $x_i$ is a $D$-dimensional vector, and the label $y_i$ is binary. Namely,
\begin{equation}
\mathcal{D}=\{\{x_1,y_1\},\{x_2,y_2\},...,\{x_N,y_N\}\},
x_i \in \mathbb{R}^D, y_i \in \{+1, -1\}. 
\end{equation}

Let
\begin{equation}
\begin{split}
&\mathcal{D_+}=\{\{x_i,y_i\} \in \mathcal{D}\mid y_i=+1\}, \\
&\mathcal{D_-}=\{\{x_i,y_i\} \in \mathcal{D}\mid y_i=-1\}  \\
\end{split}
\end{equation}
for further simplicity.

We consider the binary classification of the dataset, using GP regression with a certain kernel function $k(x,x')$ \cite{rasmussen_gaussian_2006}. We define a GP regressor $\mathcal{R}(x): \mathbb{R}^D \rightarrow \mathcal{N}(y;\mu_\mathcal{R},\sigma^2_\mathcal{R})$. $\mathcal{R}(x)$ gives the output as a Gaussian distribution $\mathcal{N}(y;\mu_\mathcal{R},\sigma^2_\mathcal{R})$.
Then we define a probabilistic GP classifier $\mathcal{C}(\cdot): \mathbb{R}^D \rightarrow \{+1,-1\}$, which is constructed as below:
\begin{equation}
\begin{split}
    \mathcal{C}(x) :=
    \begin{cases}
+1 &\mathrm{if} ~ p \sim  \mathcal{N}(x) \geq 0\\
-1 &\mathrm{otherwise}
    \end{cases}
\end{split}
\end{equation}
where $p$ is a value sampled from the distribution $\mathcal{N}(x)$. 
Intuitively, $\mathcal{C}(x)$ is a probabilistic binary classifier whose output is $+1$ when a sample from the output distribution of $\mathcal{R}(x)$ is not negative.

Let the maximum value of the kernel function between two input points of $\mathcal{D}$ with different labels be $s$.
We write the definition of $s$ as follows, without loss of generality:

The value of the kernel function between 2 points \begin{math}x_+ \in \mathcal{D_+}, x_- \in \mathcal{D_-} \end{math} is described as \textit{s}, where \begin{math}x_- \in \mathcal{D}_-\end{math} gives the maximum value of $k(x_+,x_-)$ when $x_+$ is fixed.
$s$ can be written as a function of $x_+$, that is
$$s = \max_{x_- \in \mathcal{D}_-}k(x_+,x_-).$$


Put $x_* \in \mathbb{R}^D$ such that $k(x_+,x_*) = r$, where $r$ is a constant.

In this paper, we introduce the following assumption.
\begin{assumption}[$\epsilon$-proximity]
     Let $\mu_\mathcal{R}$ be the predictive mean of $x_*$ by GP regression trained with $\mathcal{D}$, and $\mu_{\mathcal{R}2}$ be the corresponding value by GP regression trained with $\mathrm{\{\{}x_+,+1\mathrm{\},\{}x_-,-1\mathrm{\}\}}$. Then, for all $x_* \in \mathbb{R}^D$ such that $k(x_+,x_*) = r$,
    \begin{equation}
        |\mu_\mathcal{R} - \mu_{\mathcal{R}2}| \leq \epsilon ~\mathrm{and}~ \mu_\mathcal{R}>0
    \end{equation}
    holds with some $\epsilon \geq 0$.
    \label{assumption:1}
\end{assumption}

Intuitively, the former part of \cref{assumption:1} suggests that the predictive mean of the GP regressor with all training data is close to that with only two training data, implying that the effect of training data other than $x_+$, $x_-$ is small.

\subsection{Maximum Success Probability of AE}
\begin{Theorem}
\label{thm:msp}
Consider $\mathcal{R}(x)$ with kernel function $k(x,x')$ trained with $\mathcal{D}$.
Let $x_+$ be arbitrarily chosen from $\mathcal{D_+}$ and $x_-$ be the data point of $\mathcal{D_-}$, such that $k(x_+,x_-)$ is the greatest value when $x_+$ is fixed and $x_-$ is taken from  $\mathcal{D_-}$. Then, for any $x_* \in \mathbb{R}^D$ such that $k(x_+,x_*) = r$ and $k(x_+,x_*) > k(x_-,x_*) + \epsilon(k(x_+,x_+) - k(x_+,x_-))$, the upper bound of the probability that $\mathcal{C}(x_*)=-1$ is upper-bounded with Maximum Success Probability (MSP) function $\phi$ as

\begin{equation}
\Pr(\mathcal{C}(x_*) = -1) \,< \, \Phi\left(-\frac{\mu}{\sigma}\right) \,<\, \frac{1}{2}\exp{\left(-\frac{\mu^2}{2\sigma^2}\right)} \,=\, \phi(r|\mathcal{D}),
\label{eq:msp}
\end{equation}

where the notations below are used.
\begin{equation}
\Phi(x) = \frac{1}{\sqrt{2\pi}}\int_{-\infty}^x\exp{\left(-\frac{u^2}{2}\right)}du,
\end{equation}

\begin{equation}
\mu = \frac{k(x_+,x_{*max})-k(x_-,x_{*max})}{k(x_+,x_+)-k(x_+,x_-)} - \epsilon,
\end{equation}

\begin{equation}
\begin{split}
\sigma^2 = &k(x_+,x_+) - 
\frac{k(x_+,x_+)({k(x_+,x_{*max})}^2+{k(x_-,x_{*max})}^2)}{{k(x_+,x_+)}^2 - {{k(x_+,x_-)}^2}}+\\
&\frac{2k(x_+,x_-)k(x_+,x_{*max})k(x_-,x_{*max})}{{k(x_+,x_+)}^2 - {{k(x_+,x_-)}^2}},\\
\end{split}
\end{equation}

\begin{equation}
x_{*max} = \argmax_{x_*}[k(x_-,x_*)].
\end{equation}

The above bound holds for any translation-invariant kernel function $k(x,x')$, that is, a kernel function that satisfies $k(x,x)$ is constant for all $x \in \mathbb{R}^D$.
\end{Theorem}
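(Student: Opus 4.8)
By construction of $\mathcal{C}$, the event $\{\mathcal{C}(x_*)=-1\}$ is exactly the event that a draw $p$ from the predictive law $\mathcal{N}(y;\mu_\mathcal{R},\sigma_\mathcal{R}^2)$ of $\mathcal{R}(x_*)$ is negative, so after standardising, $\Pr(\mathcal{C}(x_*)=-1)=\Phi(-\mu_\mathcal{R}/\sigma_\mathcal{R})$. The first move is to replace the full-dataset quantities $(\mu_\mathcal{R},\sigma_\mathcal{R})$ by the ones coming from the GP trained on the two points $\{(x_+,+1),(x_-,-1)\}$, whose mean and variance I denote $\mu_{\mathcal{R}2},\sigma_{\mathcal{R}2}^2$. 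For the mean this is precisely \cref{assumption:1}: $\mu_\mathcal{R}\ge \mu_{\mathcal{R}2}-\epsilon$ and $\mu_\mathcal{R}>0$. For the variance I would use the standard monotonicity of the GP posterior variance under adding training points (law of total variance: conditioning on more jointly Gaussian observations cannot increase the conditional variance of $f(x_*)$), which gives $\sigma_\mathcal{R}^2\le\sigma_{\mathcal{R}2}^2$. Since $\Phi$ is increasing and $\mu_\mathcal{R}>0$, these two facts combine to yield $\Pr(\mathcal{C}(x_*)=-1)\le\Phi\!\left(-(\mu_{\mathcal{R}2}-\epsilon)/\sigma_{\mathcal{R}2}\right)$, provided the numerator is nonnegative (guaranteed below by the positivity hypothesis).

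Next I would compute $\mu_{\mathcal{R}2}$ and $\sigma_{\mathcal{R}2}^2$ in closed form. Writing $c:=k(x_+,x_+)=k(x_-,x_-)=k(x_*,x_*)$ — these being equal is exactly where translation-invariance enters — and $a:=k(x_+,x_-)$, the $2\times2$ Gram matrix of $\{x_+,x_-\}$ inverts explicitly, and with label vector $(+1,-1)^\top$ one obtains $\mu_{\mathcal{R}2}=\bigl(k(x_+,x_*)-k(x_-,x_*)\bigr)/(c-a)$ together with the displayed expression for $\sigma_{\mathcal{R}2}^2$. This step is routine $2\times2$ linear algebra; it uses $c^2-a^2>0$, which holds whenever the kernel is strictly positive definite on the distinct points $x_+,x_-$. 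After this substitution the bound depends on $x_*$ only through $r=k(x_+,x_*)$ and $t:=k(x_-,x_*)$.

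The remaining step — and, I expect, the main obstacle — is to pass to the worst case over all admissible $x_*$, i.e. over all values of $t$ attainable by some $x_*$ with $k(x_+,x_*)=r$. One must show that $(\mu_{\mathcal{R}2}-\epsilon)/\sigma_{\mathcal{R}2}$, viewed as a function of $t$, is minimised at the largest admissible $t$, namely $t=k(x_-,x_{*max})$ with $x_{*max}=\argmax_{x_*}[k(x_-,x_*)]$; plugging in this value produces exactly the $\mu$ and $\sigma$ of the statement and gives the first inequality of \eqref{eq:msp}. The delicate point is that while the numerator $\mu_{\mathcal{R}2}-\epsilon=(r-t)/(c-a)-\epsilon$ is strictly decreasing in $t$, the denominator $\sigma_{\mathcal{R}2}$ is \emph{not} globally monotone in $t$, so monotonicity of the ratio has to be argued only over the feasible range. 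The natural tool is that feasibility of a pair $(r,t)$ is equivalent to positive semidefiniteness of the $3\times3$ Gram matrix of $\{x_+,x_-,x_*\}$, which in turn is equivalent to $\sigma_{\mathcal{R}2}^2\ge0$; this confines $t$ to an interval on which one can check the ratio is decreasing, and the hypothesis $k(x_+,x_*)>k(x_-,x_*)+\epsilon\bigl(k(x_+,x_+)-k(x_+,x_-)\bigr)$ (taken at the extremal point) ensures the resulting $\mu$ is strictly positive, keeping us in the regime $\mu/\sigma>0$.

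Finally, setting $z:=\mu/\sigma>0$, the second inequality of \eqref{eq:msp} is just the classical Gaussian (Chernoff) tail bound $\Phi(-z)<\tfrac12\exp(-z^2/2)$, which is valid and strict for every $z>0$; this produces the closed form $\phi(r\mid\mathcal{D})=\tfrac12\exp(-\mu^2/2\sigma^2)$ and completes the chain. The distribution-independence advertised in the statement is then immediate: the final bound depends on $\mathcal{D}$ only through $x_+$ and $x_-$ — hence, for a translation-invariant kernel, only through the closest differently-labelled pair — because the whole estimate was reduced in the first step to the two-point GP.
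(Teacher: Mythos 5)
Your plan follows essentially the same route as the paper's proof: reduction to the two-point GP via posterior-variance monotonicity (the paper's Lemma~1, which you justify by the law of total variance rather than by explicit block inversion of the Gram matrix) together with \cref{assumption:1}, the explicit $2\times 2$ computation of $\mu_{\mathcal{R}2}$ and $\sigma_{\mathcal{R}2}^2$, worst-casing over $t=k(x_-,x_*)$, and the Gaussian tail bound. The one step you defer --- that $(\mu_{\mathcal{R}2}-\epsilon)/\sigma_{\mathcal{R}2}$ is minimised at the largest feasible $t$ --- is exactly the paper's Lemma~3, proved there by differentiating $\mu^2/\sigma^2$ in $\theta_{r2}$ and invoking $k(x_+,x_+)>k(x_+,x_-)$ plus an additional condition $r<k(x_+,x_-)$; the non-monotonicity of the denominator that you flag is real, and that extra hypothesis is precisely where the paper handles it.
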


The schematic diagram is shown in \cref{fig:thrmsp}.

\begin{figure}
    \centering
\begin{tikzpicture}[scale=0.5]
    \draw[thick, ->] (-2, 0)--(10, 0) node[right]{};
    \draw[thick, ->] (0,-0.5)--(0,7) node[above] {};
    \draw[dashed](2.6,7)--(4.6,-.5);
    \coordinate [label=above left:{O}] (origin) at (0,0);
    \coordinate [label={[blue]\large $\mathcal{D}_+$}] (bluelb) at (1.8,6.0);
    \coordinate [label={[orange]\large $\mathcal{D}_-$}] (orangelb) at (4.1,6.0);
    \coordinate [label=below left:{$x_+$}] (x1) at (3,0.8);
    \coordinate [label=below right:{$x_-$}] (x1) at (5,2);
    \coordinate [label=below:{$x_*$}] (x1) at (3.9,1.2);

    \draw[very thick] (3,0.8)--(5,2);

    \draw ([shift={(5.2,-0.6)}]94:2.607) arc[radius=2.607, start angle=94, end angle= 150];
    \draw ([shift={(2.8,2.04)}]280:1.26) arc[radius=1.26, start angle=280, end angle= 320];
    
    \coordinate [label=above:{$s$}] (rlabel) at (3.7,1.6);
    \coordinate [label=below:{$r$}] (slabel) at (3.4,0.9);
    
    \fill[blue] (-0.5,3) circle (0.12);
    \fill[blue] (1,2.0) circle (0.12);
    \fill[blue] (2,3.6) circle (0.12);
    \fill[blue] (3,0.8) circle (0.12);
    \fill[blue] (3.4,2.8) circle (0.12);
    
    \fill[orange] (5,2) circle (0.12);
    \fill[orange] (8.5,3) circle (0.12);
    \fill[orange] (5.4,3.7) circle (0.12);
    \fill[orange] (7,1) circle (0.12);
    \fill[orange] (4,3.8) circle (0.12);
    
    \fill[green] (3.8,1.28) circle (0.12);
\end{tikzpicture}
    \caption{\small{Illustration of the conditions in Theorem \ref{thm:msp} assuming the input space as $\mathbb{R}^2$. $x_+$ and $x_-$ are input data points from $\mathcal{D}_+$ and $\mathcal{D}_-$ respectively. Note that $r=k(x_+,x_*)$ and $s=k(x_+,x_-)$.}}
    \label{fig:thrmsp}
\end{figure}
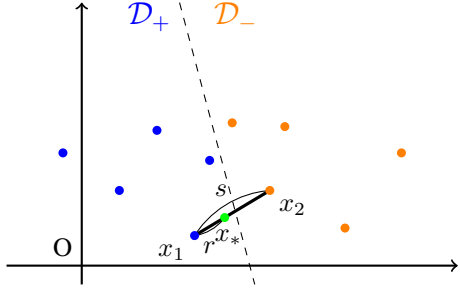

\textit{Proof sketch.} First, we prove that the prediction variance increases if the training point increases in GP regression (Lemma 1). In the next, we prove that $\operatorname{Erf}(0;\mu,\sigma) = \int^0_{-\infty}\mathcal{N}(x;\mu, \sigma^2)dx$ increases monotonically with respect to $\sigma^2$ if $\mu>0$ (Lemma 2). Then we prove that the probability that $x_*$ is classified as $-1$ increases monotonically with respect to $k(x_*,x_-)$ (Lemma 3), and the theorem is proved. \qed

\begin{theoremproofref}
The complete proof will be found in the full version of this paper \cite{maeshima2024robustnessboundssuccessfuladversarial}.
\end{theoremproofref}

\begin{theoremproof}
    \begin{toappendix}
    \subsection{Proof of Theorem 1}
    \label{sec:proof1}
    In the proof, we set the notations below.
    
    We construct a GP regressor $\mathcal{R}(x)$ with the kernel function $k(x,x')$. In GP regressor, the mean $\mu_\mathcal{R}$ and the variance $\sigma_\mathcal{R}^2$ of the new data point $x_*$ can be calculated as
    
    \begin{equation}
    \mu_\mathcal{R} = k(x_*)^TK^{-1}\textbf{y}
    \label{eq0}
    \end{equation}
    \begin{equation}
    \sigma_\mathcal{R}^2 = k(x_*,x_*) - k(x_*)^TK^{-1}k(x_*)
    \label{eq1}
    \end{equation}
    
    where 
    \begin{equation}
    k(x_*) = \left(k(x_+,x_*),\cdots,k(x_n,x_*)\right)^T,
    \end{equation}
    \begin{equation}
    K = \begin{pmatrix}
    k(x_+,x_+) & \cdots & k(x_+,x_n) \\
    \cdots & \cdots & \cdots \\
    k(x_n,x_+) & \cdots & k(x_n,x_n) \\
    \end{pmatrix},\\
    \end{equation}
    
    \begin{equation}
    \textbf{y}=\left(y_1,\cdots,y_n\right).
    \end{equation}
    
    In the next we construct GP classifier $\mathcal{C}(x)$ whose prediction is $y_*=+1$ if a sample from $\mathcal{R}(x)$ is greater than 0, and the prediction is $y_*=-1$ otherwise.
    
    For further calculation, set the below notations
    \begin{equation}
    \begin{split}
    &\theta_1 = k(x_*,x_*),~\theta_{r1} = k(x_+,x_*),\\
    &\theta_{r2} = k(x_-,x_*),~\theta_s = k(x_+,x_-).\\
    \end{split}
    \end{equation}
    
    The following lemma is a simpler version of the statement Exercise 4 of Chapter 2 from \cite{rasmussen_gaussian_2006}. The proof is in the supplemental material.
    \begin{lemma}
    Let $\mathrm{Var}_n(x_*)$ be the predictive variance of a GP regression at $x_*$ given a training dataset of size $n$, and $\mathrm{Var}_{n-1}(x_*)$ be the correspondent variance given only the first $n-1$ points of the training dataset. Then, $\mathrm{Var}_n(x_*) < \mathrm{Var}_{n-1}(x_*)$ holds.
    \label{lemma:varmin}
    
    \end{lemma}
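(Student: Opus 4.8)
The plan is to peel off the $n$-th training point and compare the two variances through a Schur-complement identity. Order the training inputs so that $x_n$ is the last one and write the Gram matrix on the first $n$ points in block form
\[
K_n=\begin{pmatrix}K_{n-1}&b\\ b^{T}&c\end{pmatrix},\qquad b=\bigl(k(x_1,x_n),\dots,k(x_{n-1},x_n)\bigr)^{T},\quad c=k(x_n,x_n),
\]
and split the cross-covariance vector accordingly as $k_n(x_*)=(v^{T},w)^{T}$, where $v=k_{n-1}(x_*)$ collects the first $n-1$ entries and $w=k(x_n,x_*)$. Let $\gamma:=c-b^{T}K_{n-1}^{-1}b$ be the Schur complement; since the kernel is positive definite and the training inputs are distinct, $\gamma>0$ (indeed $\gamma=\mathrm{Var}_{n-1}(x_n)$).

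Next I would substitute the standard block-inverse formula for $K_n^{-1}$ into $\mathrm{Var}_n(x_*)=k(x_*,x_*)-k_n(x_*)^{T}K_n^{-1}k_n(x_*)$ and collect terms; a short computation collapses everything to the identity
\[
\mathrm{Var}_n(x_*)=\mathrm{Var}_{n-1}(x_*)-\frac{1}{\gamma}\bigl(w-b^{T}K_{n-1}^{-1}v\bigr)^{2}.
\]
Because $\gamma>0$, the subtracted quantity is nonnegative, which already gives $\mathrm{Var}_n(x_*)\le\mathrm{Var}_{n-1}(x_*)$; it is strictly positive, and hence the inequality strict, exactly when $k(x_n,x_*)\ne b^{T}K_{n-1}^{-1}k_{n-1}(x_*)$, i.e. when the value of $f$ at $x_*$ is not already perfectly determined by the first $n-1$ observations together with $x_n$ --- the generic situation, and the one relevant to Theorem~\ref{thm:msp}. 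The same fact also follows with no computation from the law of total variance applied to the jointly Gaussian vector $(f(x_*),f(x_1),\dots,f(x_n))$: conditioning on the additional coordinate $f(x_n)$ cannot increase the (here deterministic) conditional variance of $f(x_*)$.

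The algebra is routine; the only step needing care is the strictness claim, which rests on $\gamma>0$ and on $w\ne b^{T}K_{n-1}^{-1}v$. For the former I would invoke positive-definiteness of $k$ together with distinctness of the training inputs (if inputs may coincide, one replaces $K_{n-1}^{-1}$ by a pseudoinverse and the statement weakens to ``$\le$''); for the latter I would note that the excluded $x_*$ form a proper lower-dimensional algebraic set, so the strict bound holds for all $x_*$ that occur in Theorem~\ref{thm:msp}. I expect this bookkeeping around degenerate configurations, rather than the matrix manipulation itself, to be the main obstacle to a fully rigorous statement.
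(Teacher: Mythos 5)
Your proposal follows essentially the same route as the paper's proof: both partition $K_n$ into a block form around $K_{n-1}$, apply the block-inverse (Schur complement) formula, and reduce the variance difference to $\gamma^{-1}\bigl(w-b^{T}K_{n-1}^{-1}v\bigr)^{2}$ with $\gamma$ identified as the predictive variance of $x_n$ given the first $n-1$ points, hence positive. In fact you are more careful than the paper on the one delicate point, namely that the squared term is only nonnegative in general (the paper asserts it is strictly positive without justification), so your remark about degenerate $x_*$ is a genuine, if minor, improvement.
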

    
    \begin{proof}
    Let
    \begin{equation}
    K_n = \begin{pmatrix}
    k(x_+,x_+) & \cdots & k(x_+,x_n) \\
    \vdots & \ddots & \vdots \\
    k(x_+,x_n) & \cdots & k(x_n,x_n) \\
    \end{pmatrix},
    \end{equation}
    \begin{equation}
    k_{*n}={(k(x_+,x_*),\cdots,k(x_n,x_*))}^T
    \end{equation}
    
    \begin{equation}
    \operatorname{Var}(x_{*n-1}) = k(x_*,x_*) - {k_{*n-1}}^T K_{n-1}^{-1} k_{*n-1},
    \end{equation}
    
    \begin{equation}
    \operatorname{Var}(x_{*n}) = k(x_*,x_*) - {k_{*n}}^T K_n^{-1} k_{*n}
    \end{equation}

    $K_n$ can be written as 
    \begin{equation}
    K_n = \begin{pmatrix}
    K_{n-1} & B \\
    B^T & k(x_n,x_n)
    \end{pmatrix}, 
    \end{equation}
    where $B={(k(x_+,x_n),\cdots,k(x_{n-1},x_n))}^T$.
    
    Therefore, $K_n^{-1}$ can be decomposed as
    \begin{equation}
    K_n^{-1} = \begin{pmatrix}
    \Tilde{A} & \Tilde{B} \\
    \Tilde{B}^T & \Tilde{D}
    \end{pmatrix}, 
    \end{equation}
    where
    \begin{equation}
    \left\{
    \begin{array}{l}
        \Tilde{A} = K_{n-1}^{-1} + K_{n-1}^{-1}BMB^TK_{n-1}^{-1}\\
        \Tilde{B} = -K_{n-1}^{-1}BM\\
        \Tilde{D} = M\\
        M = (k(x_n,x_n)-B^TK_{n-1}^{-1}B)^{-1}
    \end{array}
    \right.
    \end{equation}
    
    Now
    \begin{equation}
    \begin{split}
    &{k_{*n}}^T K_n^{-1} k_{*n} = 
    {k_{*n-1}}^T \Tilde{A}{k_{*n-1}}
    + 2k(x_n,x_*)\Tilde{B}^T{k_{*n-1}} + k(x_n,x_*)^2\Tilde{D}.
    \end{split}
    \end{equation}
    
    Decomposing A results in
    \begin{equation}
    \begin{split}
    \operatorname{Var}(x_{*n}) = &k(x_*,x_*) - ( {k_{*n-1}}^T K_{n-1}^{-1} {k_{*n-1}} + \\
    &{k_{*n-1}}^TK_{n-1}^{-1}BMB^TK_{n-1}^{-1}{k_{*n-1}}+\\
    &2k(x_n,x_*)\Tilde{B}^T{k_{*n-1}} + k(x_n,x_*)^2\Tilde{D})\\
    \operatorname{Var}(x_{*n}) =  &\operatorname{Var}(x_{*n-1}) - ({k_{*n-1}}^TK_{n-1}^{-1}BMB^TK_{n-1}^{-1}{k_{*n-1}}+\\
    &2k(x_n,x_*)\Tilde{B}^T{k_{*n-1}} + k(x_n,x_*)^2\Tilde{D}).\\
    \end{split}
    \label{eq:varxn}
    \end{equation}
    
    Now, proving the Lemma is equivalent to prove
    \begin{equation}
    \begin{split}
    &{k_{*n-1}}^TK_{n-1}^{-1}BMB^TK_{n-1}^{-1}{k_{*n-1}}+ 2k(x_n,x_*)\Tilde{B}^T{k_{*n-1}} + k(x_n,x_*)^2\Tilde{D}>0.
    \end{split}
    \label{eqtoproof}
    \end{equation}
    
    We now prove Equation \ref{eqtoproof}. It can be decomposed as:
    \begin{equation}
    \begin{split}
    &M\left({k_{*n-1}}^TK_{n-1}^{-1}BB^TK_{n-1}^{-1}{k_{*n-1}}-2k(x_n,x_*)B^TK_{n-1}^{-1}{k_{*n-1}} + k(x_n,x_*)^2\right)
    \end{split}
    \label{eq:toproof2}
    \end{equation}
    ($\because$ M is $1 \times 1$ matrix, therefore M can be regarded as a scalar.)
    
    Now $B^TK_{n-1}^{-1}{k_{*n-1}}$ is a scalar. Then, consider $(B^TK_{n-1}^{-1}{k_{*n-1}}-k(x_n,x_*))^2$.
    
    \begin{equation}
    \begin{split}
    (B^T&K_{n-1}^{-1}{k_{*n-1}}-k(x_n,x_*))^2 =\\
    &{k_{*n-1}}^TK_{n-1}^{-1}BB^TK_{n-1}^{-1}{k_{*n-1}}-2k(x_n,x_*)B^TK_{n-1}^{-1}{k_{*n-1}} + k(x_n,x_*)^2
    \end{split}
    \end{equation}
    
    Now consider
    $M = (k(x_n,x_n)-B^TK_{n-1}^{-1}B)^{-1}$. This means the predictive variance by GP regression of $x_n$ with a dataset which includes $x_+,x_-,\cdots,x_{n-1}$. Therefore, $M>0$.
    
    Plugging $M>0$ and $(B^TK_{n-1}^{-1}{k_{*n-1}}-k(x_n,x_*))^2>0$ into Equation \ref{eq:toproof2},
    \begin{equation}
    \begin{split}
    &({k_{*n-1}}^TK_{n-1}^{-1}BMB^TK_{n-1}^{-1}{k_{*n-1}}+ 2k(x_n,x_*)\Tilde{B}^T{k_{*n-1}} + k(x_n,x_*)^2\Tilde{D})>0
    \end{split}
    \end{equation}
    holds. Hence the lemma was proved.
    %
    %
    
    \end{proof}
    
    Thus, if the two points ($x_+, x_-$) in the training dataset of GP regression are fixed, the maximum predictive variance is the variance calculated with the two points as the training dataset only.
    
    Next, we calculate the predicted mean under the condition that only $x_+$ and $x_-$ are used as the training dataset.
    We write the predictive mean as $\mu_{\mathcal{R}2}$.
    
    Then $k(x_*)$ and \textbf{y} are written as $k(x_*) = (\theta_{r1},\theta_{r2})^T$, $\textbf{y}=(+1,-1)$. Using these, the predicted mean is calculated below, plugging them into \cref{eq0}.
    
    \begin{equation}
    \mu_{\mathcal{R}2} = k(x_*)^TK^{-1}y = \frac{\theta_{r1}-\theta_{r2}}{\theta_{1}-\theta_{s}}
    \label{mean}
    \end{equation}
    
    The category boundary is regarded as the point where the predicted mean is 0, therefore on the category boundary
    $\frac{\theta_{r1}-\theta_{r2}}{\theta_{1}-\theta_{s}} = 0$ holds.
    
    Therefore, category boundary is located where $\theta_{r1}=\theta_{r2}.$
    
    This implies that the set of the points which satisfies $k(x_+,x_*) =  k(x_-,x_*)$ is the category boundary, and \cref{mean} implies that where $k(x_+,x_*) >  k(x_-,x_*)$ the mean is positive.
    
    Similarly, the prediction variance with only $x_+,x_-$ as the training dataset is described as follows, plugging the values into \cref{eq1}:
    
    \begin{equation}
    \sigma_{\mathcal{R}2}^2  =\theta_1 - \frac{1}{{\theta_1}^2 - {{\theta_s}^2}}(\theta_1({\theta_{r1}}^2+{\theta_{r2}}^2)-2\theta_s\theta_{r1}\theta_{r2})
    \label{eq:var2}
    \end{equation}
    
    Considering \cref{lemma:varmin}, \cref{eq:var2} gives the maximum predictive variance of GP regression with the training dataset including $x_+,x_-$.
    
    The probability of $\mathcal{C}(x_*)=-1$ is the probability that the value of a sample from the output distribution is lower than 0.
    Therefore, it can be calculated as follows, using the cumulative distribution function of Gaussian distribution
    \begin{equation}
    \operatorname{Erf}(0;\mu,\sigma) = \int^0_{-\infty}\mathcal{N}(x;\mu, \sigma^2)dx
    \end{equation}
    
    where
    \begin{equation}
    \mathcal{N}(x;\mu, \sigma^2) = \frac{1}{\sqrt{2\pi\sigma^2}}
    \exp{\left(-\frac{(x-\mu)^2}{2\sigma^2}\right)}.
    \end{equation}
    
    \begin{lemma}
    $\operatorname{Erf}(0;\mu,\sigma)$ is monotonically increasing with respect to $\sigma^2$, under the condition $\mu > 0$.
    \label{lemma:varub}
    \end{lemma}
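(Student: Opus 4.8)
The plan is to collapse the entire one-parameter family of Gaussian tail integrals into a single evaluation of the standard normal CDF $\Phi$ and then read off monotonicity from elementary properties of $\Phi$. First I would apply the substitution $u = (x-\mu)/\sigma$ to the defining integral $\operatorname{Erf}(0;\mu,\sigma) = \int_{-\infty}^0 \mathcal{N}(x;\mu,\sigma^2)\,dx$. Since $dx = \sigma\,du$, the upper limit $x=0$ corresponds to $u = -\mu/\sigma$, and $x \to -\infty$ corresponds to $u \to -\infty$, this gives
\begin{equation}
\operatorname{Erf}(0;\mu,\sigma) \;=\; \frac{1}{\sqrt{2\pi}}\int_{-\infty}^{-\mu/\sigma}\exp\!\left(-\frac{u^2}{2}\right)du \;=\; \Phi\!\left(-\frac{\mu}{\sigma}\right),
\end{equation}
so the quantity of interest depends on $(\mu,\sigma)$ only through the scalar $-\mu/\sigma$.

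Next I would regard this as a function of the variance $t := \sigma^2 > 0$, writing $g(t) := \Phi\!\left(-\mu\,t^{-1/2}\right)$. Because $\mu > 0$, the inner map $t \mapsto -\mu\,t^{-1/2}$ is strictly increasing on $(0,\infty)$, and $\Phi$ is strictly increasing on $\mathbb{R}$; hence $g$ is strictly increasing as a composition of strictly increasing functions, which is exactly the claim. If an explicit derivative is preferred, the chain rule yields $g'(t) = \frac{\mu}{2\sqrt{2\pi}}\,t^{-3/2}\exp\!\left(-\frac{\mu^2}{2t}\right)$, which is manifestly positive whenever $\mu>0$ and $t>0$, giving the same conclusion.

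I do not anticipate a genuine obstacle here: essentially all the work is in recognizing the integral as $\Phi(-\mu/\sigma)$. The one place deserving care is the bookkeeping of signs — it is precisely the hypothesis $\mu > 0$ that forces an increase of $\sigma^2$ to push the argument $-\mu/\sigma$ upward (toward $0$), hence to increase $\operatorname{Erf}(0;\mu,\sigma)$; for $\mu < 0$ the monotonicity would reverse, and for $\mu = 0$ the value is the constant $\frac{1}{2}$. This also matches the intuition that, when the predictive mean is positive, widening the Gaussian moves more of its mass below the decision threshold $0$, which is what will later let us bound the misclassification probability by its worst case over admissible variances.
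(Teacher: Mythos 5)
Your proof is correct, and it takes a genuinely different and considerably shorter route than the paper's. The paper proves this lemma by differentiating under the integral sign with respect to $\sigma^2$, obtaining an integrand proportional to $\exp\left(-\frac{(x-\mu)^2}{2\sigma^2}\right)\left(\frac{(x-\mu)^2}{\sigma^2}-1\right)$, showing via Gaussian integral identities that the corresponding integral over the full half-line $(-\infty,0]$ vanishes when $\mu=0$, and then running a case analysis ($\mu<\sigma$ versus $\mu>\sigma$) to show that the residual integral over $[-\mu,0]$ is positive. You instead normalize immediately, writing $\operatorname{Erf}(0;\mu,\sigma)=\Phi\left(-\mu/\sigma\right)$, and observe that for $\mu>0$ the map $t\mapsto-\mu t^{-1/2}$ is strictly increasing in $t=\sigma^2$, so the claim follows by composition with the strictly increasing $\Phi$; your explicit derivative $g'(t)=\frac{\mu}{2\sqrt{2\pi}}t^{-3/2}\exp\left(-\frac{\mu^2}{2t}\right)>0$ checks out. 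Your argument is cleaner: it avoids both the (unjustified in the paper, though standard) interchange of differentiation and integration and the sign-splitting case analysis, and it dovetails with the paper's own later use of the identity $\operatorname{Erf}(0;\mu,\sigma)=\Phi\left(-\mu/\sigma\right)$ in the proof of the subsequent lemma. The paper's computation does yield as a by-product the explicit vanishing of the derivative at $\mu=0$ and its sign reversal for $\mu<0$, but you note both of these boundary behaviors anyway, so nothing of substance is lost.
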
 
    \begin{proof}
    
    Let 
    \begin{equation}
    \operatorname{Erf}(0;\mu,\sigma) = \int^0_{-\infty}\mathcal{N}(x;\mu, \sigma^2)dx
    \end{equation}
    and put
    \begin{equation}
    g(\sigma) = \frac{\partial}{\partial\sigma^2} \int^0_{-\infty}\mathcal{N}(x;\mu, \sigma^2)dx.
    \label{gsigma}
    \end{equation}
    
    \Cref{gsigma} can be rewritten as:
    
    \begin{equation}
    g(\sigma) = \int^0_{-\infty}\frac{\partial}{\partial\sigma^2} \frac{1}{\sqrt{2\pi\sigma^2}}
    \exp{\left(-\frac{(x-\mu)^2}{2\sigma^2}\right)}dx
    \end{equation}
    and calculating partial differentiation,
    \begin{equation}
    \begin{split}
    &g(\sigma) =\\
    &\frac{1}{2}(2\pi)^{-\frac{1}{2}}\sigma^{-3}\int^0_{-\infty}\exp{\left(-\frac{(x-\mu)^2}{2\sigma^2}\right)}\left( \frac{(x-\mu)^2}{\sigma^2}-1 \right)dx.
    \end{split}
    \label{eqgauss}
    \end{equation}
    
    Then we show that $g(\sigma) > 0$ under the condition $\mu > 0$.

    \paragraph{calculating integral when $\mu=0$}
    Let $\mu=0$.
    
    Then $g(\sigma)$ can be written as
    
    \begin{equation}
    g(\sigma) = \frac{1}{2}(2\pi)^{-\frac{1}{2}}\sigma^{-3}\int^0_{-\infty}\exp{\left(-\frac{x^2}{2\sigma^2}\right)}\left( \frac{x^2}{\sigma^2}-1 \right)dx
    \end{equation}
    
    Calculating the integral above, using these integral(Gaussian Integral)
    \begin{equation}
    \begin{split}
    &\int_{-\infty}^0\exp\left({-ax^2}\right) = \frac{1}{2}\sqrt{\frac{\pi}{a}}, \\
    &\int_{-\infty}^0x^2\exp\left({-ax^2}\right) = \frac{1}{4a}\sqrt{\frac{\pi}{a}}.
    \end{split}
    \end{equation}
    
    Now
    \begin{equation}
    \int^0_{-\infty}\exp{\left(-\frac{(x-\mu)^2}{2\sigma^2}\right)}\left( \frac{(x-\mu)^2}{\sigma^2}-1 \right)dx
    \end{equation}
    can be written as below, under condition $\mu=0$.
    \begin{equation}
    \frac{1}{\sigma^2}\int^0_{-\infty}x^2\exp{\left(-\frac{x^2}{2\sigma^2}\right)}-\int^0_{-\infty}\exp{\left(-\frac{x^2}{2\sigma^2}\right)}dx
    \end{equation}
    
    Then the Gaussian integral is applied, and the integral is written as below:
    
    \begin{equation}
    \begin{split}
    &\frac{1}{\sigma^2}\int^0_{-\infty}x^2\exp{\left(-\frac{x^2}{2\sigma^2}\right)}-\int^0_{-\infty}\exp{\left(-\frac{x^2}{2\sigma^2}\right)}dx \\ &= \frac{1}{\sigma^2}\cdot\frac{1}{4}\cdot2\sigma^2\sqrt{\pi}\sqrt{2\sigma^2} - \frac{1}{2}\sqrt{2\sigma^2}\sqrt{\pi} \\ &= 0
    \end{split}
    \label{eqzero}
    \end{equation}
    
    \paragraph{calculating integral when $\mu>0$}
    Let $\mu>0$ and $\sigma>0$.
    
    \Cref{eqgauss} can be written as below, using integration by substitution $y = x - \mu$.
    
    \begin{equation}
    \int^{-\mu}_{-\infty}\exp{\left(-\frac{y^2}{2\sigma^2}\right)}\left( \frac{y^2}{\sigma^2}-1 \right)\frac{dx}{dy}dy
    \end{equation}
    
    $\frac{dx}{dy} = 1$, and this is calculated as
    \begin{equation}
    \begin{split}
    &\int^{-\mu}_{-\infty}\exp{\left(-\frac{y^2}{2\sigma^2}\right)}\left( \frac{y^2}{\sigma^2}-1 \right)dy \\
    &=\int^{-\mu}_{-\infty}\frac{y^2}{\sigma^2}\exp{\left(-\frac{y^2}{2\sigma^2}\right)}dy - \int^{-\mu}_{-\infty}\exp{\left(-\frac{y^2}{2\sigma^2}\right)}dy\\
    &=\int^{0}_{-\infty}\frac{y^2}{\sigma^2}\exp{\left(-\frac{y^2}{2\sigma^2}\right)}dy - \int^0_{-\infty}\exp{\left(-\frac{y^2}{2\sigma^2}\right)}dy \\
    &- \left( \int^{0}_{-\mu}\frac{y^2}{\sigma^2}\exp{\left(-\frac{y^2}{2\sigma^2}\right)}dy - \int^0_{-\mu}\exp{\left(-\frac{y^2}{2\sigma^2}\right)}dy \right)\\
    &=\int^0_{-\mu}\exp{\left(-\frac{y^2}{2\sigma^2}\right)}dy - \int^{0}_{-\mu}\frac{y^2}{\sigma^2}\exp{\left(-\frac{y^2}{2\sigma^2}\right)}dy
    \end{split}
    \label{eqA1}
    \end{equation}
    
    This is calculated by two conditions: a) $\mu < \sigma$ and b) $\mu > \sigma$.
    
    \paragraph{a) $\mu < \sigma$}
    
    \Cref{eqA1} can be written as:
    \begin{equation}
    \int^0_{-\mu}\left(1-\frac{y^2}{\sigma^2}\right)\exp{\left(-\frac{y^2}{2\sigma^2}\right)}dy.
    \label{eqA2}
    \end{equation}
    In the interval $-\mu < y < 0$, $\left(1-\frac{y^2}{\sigma^2}\right) > 0$ therefore the integrand $>0$ all over the integral interval. Therefore the value of \cref{eqA2} is bigger than 0 regardless of $\sigma$.
    
    \paragraph{b) $\mu > \sigma$}
    From the \cref{eqzero}
    \begin{equation}
        \int_{-\infty}^0\exp{\left(-\frac{y^2}{2\sigma^2}\right)}\left(1-\frac{y^2}{\sigma^2}\right)dy = 0,
    \end{equation}
    
    Therefore
    \begin{equation}
    \begin{split}
        \int_{-\infty}^{-\mu}&\exp{\left(-\frac{y^2}{2\sigma^2}\right)}\left(1-\frac{y^2}{\sigma^2}\right)dy +\int_{-\mu}^0\exp{\left(-\frac{y^2}{2\sigma^2}\right)}\left(1-\frac{y^2}{\sigma^2}\right)dy = 0.
    \end{split}
    \end{equation}
    
    in the interval $y<-\mu (< -\sigma)$, the integrand is negative, thus
    \begin{equation}
    \int_{-\infty}^{-\mu}\exp{\left(-\frac{y^2}{2\sigma^2}\right)}\left(1-\frac{y^2}{\sigma^2}\right)dy < 0.
    \label{eqA3}
    \end{equation}
    
    \Cref{eqA3} can be written as
    \begin{equation}
    \begin{split}
        &\int_{-\infty}^{-\mu}\exp{\left(-\frac{y^2}{2\sigma^2}\right)}\left(1-\frac{y^2}{\sigma^2}\right)dy = -\int_{-\mu}^0\exp{\left(-\frac{y^2}{2\sigma^2}\right)}\left(1-\frac{y^2}{\sigma^2}\right)dy,
    \end{split}
    \end{equation}
    
    thus
    \begin{equation}
    \int_{-\mu}^0\exp{\left(-\frac{y^2}{2\sigma^2}\right)}\left(1-\frac{y^2}{\sigma^2}\right)dy > 0
    \end{equation}
    holds.
    
    Therefore, $g(\sigma) > 0 $ regardless of $\sigma$ if $\mu>0$, then $\operatorname{Erf}(0;\mu,\sigma)$ is monotonically increasing with respect to $\sigma^2$.
    \end{proof}
    %
    %
    \Cref{lemma:varub} suggests that, the maximum $\sigma^2$ gives the maximum value of $\operatorname{Erf}(0;\mu,\sigma)$. \Cref{lemma:varmin} suggests that the maximum value of the $\sigma^2$ is given by \cref{eq:var2}, thus the maximum value of the $\operatorname{Erf}(0;\mu,\sigma)$ can be calculated, when fixing $x_*$.
    
    When $x_*$ satisfies the condition $\theta_{r1} = r$, then $\theta_1, \theta_s, \theta_{r1}$ are constants. Thus $\operatorname{Erf}(0;\mu,\sigma)$ can be regarded as the function of $\theta_{r2}$.
    
    \begin{lemma}
    $\operatorname{Erf}(0;\mu,\sigma)$ is monotonically increasing with respect to $\theta_{r2}$.
    \label{lemma:monor2}
    \end{lemma}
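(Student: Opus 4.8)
\medskip
\noindent\emph{Proof plan.} The plan is to fix $\theta_1=k(x_*,x_*)$ (a constant, since the kernel is translation invariant), $\theta_s=k(x_+,x_-)$, and $\theta_{r1}=k(x_+,x_*)=r$, and to regard $\operatorname{Erf}(0;\mu,\sigma)=\Phi(-\mu/\sigma)$ as a function of the single variable $\theta_{r2}=k(x_-,x_*)$ through \cref{mean} and \cref{eq:var2}. There $\mu=\mu(\theta_{r2})=\frac{\theta_{r1}-\theta_{r2}}{\theta_1-\theta_s}$ is affine and strictly decreasing in $\theta_{r2}$ (because $\theta_1>\theta_s$ for distinct inputs), while $\sigma^2=\sigma^2(\theta_{r2})$ from \cref{eq:var2} is a downward parabola in $\theta_{r2}$ whose vertex sits at $\theta_{r2}=\theta_s\theta_{r1}/\theta_1$. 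Throughout I would work over the values of $\theta_{r2}$ for which the triple $(x_+,x_-,x_*)$ is realizable in feature space --- equivalently $\sigma^2\ge 0$, equivalently positive semidefiniteness of the $3\times3$ Gram matrix $G$ of $x_+,x_-,x_*$ --- and $\mu>0$, which is the regime of \cref{thm:msp}.

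First I would differentiate by the chain rule,
\begin{equation}
\frac{d}{d\theta_{r2}}\operatorname{Erf}(0;\mu,\sigma)=\frac{\partial\operatorname{Erf}}{\partial\mu}\,\frac{d\mu}{d\theta_{r2}}+\frac{\partial\operatorname{Erf}}{\partial\sigma^2}\,\frac{d\sigma^2}{d\theta_{r2}},
\end{equation}
and argue the right-hand side is positive. The first summand is positive outright: $\partial\operatorname{Erf}/\partial\mu<0$ (translating the Gaussian mean to the right removes probability mass from $(-\infty,0)$) and $d\mu/d\theta_{r2}<0$ by the remark above. By \cref{lemma:varub}, the standing assumption $\mu>0$ gives $\partial\operatorname{Erf}/\partial\sigma^2>0$; hence whenever $d\sigma^2/d\theta_{r2}\ge0$ --- i.e. for $\theta_{r2}$ at most the vertex $\theta_s\theta_{r1}/\theta_1$ --- the second summand is $\ge0$ as well and monotonicity is immediate.

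The substantive case is $d\sigma^2/d\theta_{r2}<0$, where raising $\theta_{r2}$ drags the predictive mean toward the decision boundary (which helps) while simultaneously contracting the predictive variance (which hurts), so one must show the mean effect dominates. Here I would insert the explicit forms $\partial\operatorname{Erf}/\partial\mu=-\frac{1}{\sqrt{2\pi}\,\sigma}\,e^{-\mu^2/(2\sigma^2)}$ and $\partial\operatorname{Erf}/\partial\sigma^2=\frac{\mu}{2\sqrt{2\pi}\,\sigma^{3}}\,e^{-\mu^2/(2\sigma^2)}$, factor out the common positive factor $\frac{1}{\sqrt{2\pi}\,\sigma}e^{-\mu^2/(2\sigma^2)}$, and reduce the desired inequality to the polynomial bound
\begin{equation}
(\theta_1+\theta_s)\,\sigma^2\;>\;\mu\,\bigl(\theta_1\theta_{r2}-\theta_s\theta_{r1}\bigr),
\end{equation}
which I would then settle by substituting the closed forms for $\mu$ and $\sigma^2$, clearing the positive denominators, and exploiting the constraints that tie $\theta_1,\theta_s,\theta_{r1},\theta_{r2}$ together --- namely $|\theta_s|,|\theta_{r1}|,|\theta_{r2}|\le\theta_1$ and, crucially, $\det G=(\theta_1^2-\theta_s^2)\,\sigma^2\ge0$. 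I expect this last step to be the main obstacle: it is exactly here that the precise realizable range of $\theta_{r2}$ must be pinned down and the inequality checked against it, whereas everything before it is routine differentiation together with \cref{lemma:varub}.
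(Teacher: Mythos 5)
Your setup is correct and, after the dust settles, it is the same computation as the paper's in different packaging: the paper differentiates $g(\theta_{r2})=\mu^2/\sigma^2$ directly, while you apply the chain rule to $\Phi(-\mu/\sigma)$ in the pair $(\mu,\sigma^2)$ and factor out the Gaussian density. Your derivative formulas and the vertex of the variance parabola check out, and your target inequality $(\theta_1+\theta_s)\,\sigma^2>\mu\,(\theta_1\theta_{r2}-\theta_s\theta_{r1})$ is the right reduction. Indeed, if you substitute the closed forms of $\mu$ and $\sigma^2$ and clear the positive denominators, the difference of the two sides factors as $(\theta_1-\theta_s)\bigl[\theta_1(\theta_1+\theta_s)-\theta_{r1}(\theta_{r1}+\theta_{r2})\bigr]$, which is exactly the numerator whose sign the paper has to determine. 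So the two routes converge on one and the same final positivity claim; your case split on the sign of $d\sigma^2/d\theta_{r2}$ becomes unnecessary once this factorization is in hand.

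The gap is that you stop precisely at that claim and leave it unproved --- you even flag it as the expected obstacle --- and the constraints you propose to close it with ($|\theta_s|,|\theta_{r1}|,|\theta_{r2}|\le\theta_1$ and $\det G\ge 0$) are not the ones that do the job. The paper's argument signs $\theta_1(\theta_1+\theta_s)-\theta_{r1}(\theta_{r1}+\theta_{r2})$ by chaining $\theta_1(\theta_1+\theta_s)>2\theta_s^2>2\theta_{r1}^2>\theta_{r1}(\theta_{r1}+\theta_{r2})$, which uses $\theta_1>\theta_s$ (Gram positivity plus translation invariance), $\theta_{r2}<\theta_{r1}$ (hypothesis of the theorem), and, crucially, the relation $\theta_{r1}=r<\theta_s=s$ between the perturbation and the nearest opposite-label point (together with positivity of the kernel values). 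That last ingredient does not follow from realizability of the Gram matrix; without importing it, ``checking the inequality against the realizable range of $\theta_{r2}$'' will not succeed, since one can make $\theta_1(\theta_1+\theta_s)$ small relative to $\theta_{r1}(\theta_{r1}+\theta_{r2})$ while keeping $G$ positive semidefinite if $r$ is allowed to exceed $s$. As written, then, the proposal is a correct and cleanly organized setup whose decisive step is missing, and whose stated plan for that step points at the wrong hypotheses.
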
 
    
    \begin{proof}
    Set $\Phi(x)$ as the cumulative distribution function of $\mathcal{N}\left(x;0,1 \right)$.
    \begin{equation}
    \Phi(x) = \frac{1}{\sqrt{2\pi}}\int_{-\infty}^x\exp{\left(-\frac{u^2}{2}\right)}du
    \end{equation}
    
    Then $\operatorname{Erf}(0;\mu,\sigma)$ can be written as
    
    \begin{equation}
    \operatorname{Erf}(0;\mu,\sigma) = \Phi\left(-\frac{\mu}{\sigma}\right)
    \end{equation}
    
    $\Phi(x)$ is the cumulative distribution function of the Gaussian distribution, thus it is monotonically increasing with respect to $x$. Now 
    \begin{equation}
    \mu = \frac{\theta_{r1}-\theta_{r2}}{\theta_{1}-\theta_{s}}
    \end{equation}
    
    From the condition, $\theta_{r1} > \theta_{r2}$, therefore $\mu$ is positive.
    $\sigma$ is positive by definition, thus $-\frac{\mu}{\sigma}$ is negative.
    
    Set
    \begin{equation}
    g(\theta_{r2}) = \frac{\mu^2}{\sigma^2} = \frac{(\theta_{r1}-\theta_{r2})^2}{(\theta_{1}-\theta_{s})^2}\cdot\frac{1}{\sigma^2}
    \end{equation}
    
    When $g(\theta_{r2})$ is decreasing with respect to $\theta_{r2}$, then $\frac{\mu}{\sigma}$ is decreasing with respect to $\theta_{r2}$ thus $\Phi\left(-\frac{\mu}{\sigma}\right)$ is increasing with respect to $\theta_{r2}$.
    
    Below we show that $g(\theta_{r2})$ is decreasing with respect to $\theta_{r2}$. 
    
    the differentiation of $g(\theta_{r2})$ can be decomposed as
    \begin{equation}
    \begin{split}
    &\frac{\partial}{\partial\theta_{r2}} g(\theta_{r2}) = -\frac{2(\theta_{r1}-\theta_{r2})\left\{\theta_1(\theta_1+\theta_s)-\theta_{r1}(\theta_{r1}+\theta_{r2})\right\}}{\left\{-\theta_1\theta_{r2}^2+2\theta_s\theta_{r1}\theta_{r2}+\theta_1(\theta_1^2-\theta_s^2-\theta_{r1}^2)\right\}^2}.
    \label{eq:targlemma3}
    \end{split}
    \end{equation}
    The inequality below holds for any kernel functions because the determinant of the \textit{gram matrix} is positive:
    \begin{equation}
    k(x_+,x_+)k(x_-,x_-) > k(x_+,x_-)^2
    \end{equation}
    Now $k(x_+,x_+) = k(x_-,x_-)$ by the condition, thus $k(x_+,x_+) > k(x_+,x_-) \iff \theta_1 > \theta_s$ holds.
    
    From the condition of the theorem, $\theta_{r1}-\theta_{r2}>0$ holds.
    
    Thus,
    \begin{equation}
    \theta_1(\theta_1+\theta_s) > \theta_s(\theta_s+\theta_s) = 2\theta_s^2
    \end{equation}
    and
    \begin{equation}
    \theta_{r1}(\theta_{r1}+\theta_{r2}) < \theta_{r1}(\theta_{r1}+\theta_{r1}) = 2\theta_{r1}^2
    \end{equation}
    
    From the condition of the theorem, 
    \begin{equation}
        \theta_{r1} = r < \theta_s
        \label{eq:upr1st}
    \end{equation}
    holds, therefore
    \begin{equation}
    \theta_1(\theta_1+\theta_s) > 2\theta_s^2 > 2\theta_{r1}^2 >  \theta_{r1}(\theta_{r1}+\theta_{r2})
    \end{equation}
    
    and
    \begin{equation}
    \theta_1(\theta_1+\theta_s)-\theta_{r1}(\theta_{r1}+\theta_{r2}) > 0
    \label{eq:upr2nd}
    \end{equation}
    holds.

    Plugging \cref{eq:upr1st} and \cref{eq:upr2nd} into \cref{eq:targlemma3}, the partial differentiation is negative, thus $\operatorname{Erf}(0;\mu,\sigma)$ is monotonically increasing with respect to $\theta_{r2}$.
    \end{proof}
    
    From the discussions above, $x_{*max} = \argmax_{x_*}[k(x_-,x_*)]$ gives the maximum value of $\operatorname{Erf}(0;\mu,\sigma)$.
    
    Moreover, $\operatorname{Erf}(0;\mu,\sigma) = \Phi\left(-\frac{\mu}{\sigma}\right)$ is monotonically decreasing with respect to $\mu$, thus for any $\epsilon>0, \operatorname{Erf}(0;\mu,\sigma) < \operatorname{Erf}(0;\mu-\epsilon,\sigma)$ holds. 
    
    Now we prove Theorem 1.
    From the discussions above, the probability is upper-bounded as
    \begin{equation}
        \Pr(\mathcal{C}(x_*)=-1) < \int^0_{-\infty}\mathcal{N}\left(x;\mu-\epsilon,\sigma^2 \right)dx.
    \label{eq:proof1}
    \end{equation}
    Plugging the inequality below \cite{shafahi_are_2019} into \cref{eq:proof1}, that is, applying
    \begin{equation}
        \forall \alpha<0;~\Phi(\alpha) \leq \frac{1}{2}\exp{\left(-\alpha^2/2\right)}
    \end{equation}
    
    to the RHS of \cref{eq:proof1}, the below inequality
    \begin{equation}
        \Pr(\mathcal{C}(x_*)=-1) < \phi(r|\mathcal{D}) = \frac{1}{2}\exp{\left(-\frac{\mu^2}{2\sigma^2}\right)}
    \end{equation}
    holds.
    
    $\phi(x_*|\mathcal{D})$ is a \textit{ maximum success probability (MSP)} function
    where
    \begin{equation}
    \begin{split}
    &\mu = \frac{\theta_{r1}-\theta_{r2}}{\theta_{1}-\theta_{s}}-\epsilon,\\
    &\sigma^2 = \theta_1 - \frac{(\theta_1({\theta_{r1}}^2+{\theta_{r2}}^2)-2\theta_s\theta_{r1}\theta_{r2})}{{\theta_1}^2 - {{\theta_s}^2}},\\
    &\theta_{1} = k(x_+,x_+),~
    \theta_{r1} = k(x_+,x_{*max}),\\
    &\theta_{r2} = k(x_-,x_{*max}),~
    \theta_s = k(x_+,x_-),\\
    &x_{*max} = \argmax_{x_*}[k(x_-,x_*)].
    \end{split}
    \end{equation}
    \end{toappendix}
\end{theoremproof}

\subsection{Remarks on \cref{thm:msp}}
\begin{itemize}
\item $x_+$ and $x_-$ indicate the closest points of the two classes from the training dataset in the classification task since the kernel function can be regarded as the function whose output is the similarity between the data points.
Practically, $x_*$ can be regarded as an AE whose original data point is $x_+$ and the adversarial perturbation is $r$. AE is formalized as a sample whose distance from the original data points is a small value (or more specifically, a small adversarial perturbation norm) $r$ \cite{carlini_towards_2017}. 

\item This theorem indicates that when an AE is crafted using an original input point in GP with adversarial perturbation $r$, the probability that the AE is classified as a different class has a non-trivial upper bound, and that bound is determined as the function of the kernel function used in the GP regression, the distance between the original data point and the nearest data point from the different class. The distance is measured using the kernel function.

\item When the AE is classified as a different class, it can be regarded as a successful attack. 
Thus, the theorem indicates that the probability of a successful attack using AE is upper-bounded.

\item We use GP regression for the classification task. The method using a regressor for classification instead of a classifier was also used in \cite{lee_deep_2018} and produced a good result.

\item Intuitively, the probability of successful adversarial examples should be upper-bounded by the distance from the nearest sample to the decision boundary. However, in general, the decision boundary cannot be easily calculated. Instead of considering the decision boundary, we proved that finding the closest points from different classes is sufficient to investigate the robustness of GP.
\end{itemize}

\subsection{Maximum Success Probability of AE within All Data in the Training Set}
In this section, we prove the maximum success probability of AE in all data in the training set. 

The problem formulation is the same as \cref{thm:msp}.
Let $S=\{s_{11},s_{12},\cdots\, s_{mn}\}$ where $s_{ij} = k(x_i, x_j), x_i \in \mathcal{D}_+$ and $x_j \in \mathcal{D}_-$. Set $s$ as the maximum of the set $S$, and put $ x_+ \in \mathcal{D}_+, x_- \in \mathcal{D}_-$ such that $k(x_+,x_-) = s$. Set $r \in \mathbb{R}$ such that for all $x \in \mathcal{D}_+$ and $x_* \in \mathbb{R}^D$ such that $k(x,x_*) = r$ and $k(x,x_*) > k(x_-,x_*)$ holds.

\begin{Theorem}
Given $x_+$ and $x_-$ as above, and given $x_{\circ}$ arbitrarily from $\mathcal{D}_+$, set $x_{+*}, x_{\circ*} \in \mathbb{R}^D$ such that $k(x_+,x_{+*}) = r$ and $ k(x_\circ,x_{\circ*})=r$.
Then, for any $r > k(x_-,x_{+*})$, $x_\circ \in \mathcal{D}_+$, and $x_{\circ*} \in \mathbb{R}^D$, $\Pr(\mathcal{C}(x_{\circ*})=-1)$ is upper-bounded by the upper bound of $\Pr(\mathcal{C}(x_{+*})=-1)$, when \textit{MSP} function $\phi(r|\mathcal{D})$ in \cref{thm:msp} increases monotonically with respect to $k(x_+,x_-)$. That is,
$    \Pr(\mathcal{C}(x_{\circ*})=-1) \leq \phi(r|\mathcal{D}) $ holds.

\label{thm:all}
\end{Theorem}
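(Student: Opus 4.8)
\medskip\noindent\textit{Proof proposal.}
The plan is to reduce \cref{thm:all} to \cref{thm:msp} by a monotonicity argument on the \textit{MSP} function. The point is that \cref{thm:msp} does not privilege the globally closest pair: it applies to \emph{every} base point $x_\circ \in \mathcal{D}_+$ once that point is paired with \emph{its own} nearest opposite-class training point, and the bound it produces is controlled by that pair's kernel value, which never exceeds $s = \max S$.

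First I would fix an arbitrary $x_\circ \in \mathcal{D}_+$, set $x_-^{(\circ)} := \argmax_{x \in \mathcal{D}_-} k(x_\circ, x)$ and $s_\circ := k(x_\circ, x_-^{(\circ)})$, noting $s_\circ \le s$. I would then verify that the hypotheses of \cref{thm:msp} hold for the triple $(x_\circ, x_-^{(\circ)}, x_{\circ*})$: by construction $k(x_\circ, x_{\circ*}) = r$, and the standing assumptions of this subsection — that $k(x,x_*) > k(x_-,x_*)$ whenever $x \in \mathcal{D}_+$ and $k(x,x_*)=r$, that $r > k(x_-, x_{+*})$, together with \cref{assumption:1} — are what is needed to supply the strict inequality $k(x_\circ, x_{\circ*}) > k(x_-^{(\circ)}, x_{\circ*}) + \epsilon\bigl(k(x_\circ,x_\circ) - k(x_\circ,x_-^{(\circ)})\bigr)$ required by the theorem. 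Applying \cref{thm:msp} to this triple then gives
\[
\Pr(\mathcal{C}(x_{\circ*}) = -1) \;<\; \phi(r\mid\mathcal{D})\big|_{k(x_+,x_-)=s_\circ},
\]
i.e.\ the bound of \cref{eq:msp} with the pair distance instantiated at $s_\circ$ (and with $k(x,x)$ unchanged, since it is constant).

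Next I would invoke the hypothesis of \cref{thm:all} that $\phi(r\mid\mathcal{D})$ increases monotonically in $k(x_+,x_-)$. Since $s_\circ \le s$ and $s$ is the global maximum of the cross-class kernel values, monotonicity yields
\[
\phi(r\mid\mathcal{D})\big|_{k(x_+,x_-)=s_\circ} \;\le\; \phi(r\mid\mathcal{D})\big|_{k(x_+,x_-)=s} \;=\; \phi(r\mid\mathcal{D}),
\]
the right-hand side being precisely the quantity that \cref{thm:msp}, applied to the pair $(x_+,x_-)$, certifies as the upper bound of $\Pr(\mathcal{C}(x_{+*})=-1)$. Chaining the two displays gives $\Pr(\mathcal{C}(x_{\circ*})=-1) < \phi(r\mid\mathcal{D})$; as $x_\circ$ and $x_{\circ*}$ were arbitrary subject to $k(x_\circ,x_{\circ*})=r$ and $r > k(x_-,x_{+*})$, this is the assertion.

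The monotonicity step itself is not the difficulty — it is placed in the hypothesis precisely because $\phi$ fails to be increasing in $k(x_+,x_-)$ for a general kernel. The real work is to justify treating ``$\phi(r\mid\mathcal{D})$ as a function of $k(x_+,x_-)$'' at all: one must check that the remaining ingredients of $\phi$, namely $x_{*max} = \argmax_{x_*}[k(x_-,x_*)]$ and hence $\theta_{r1}=k(x_+,x_{*max})$ and $\theta_{r2}=k(x_-,x_{*max})$, are determined — given the constraint $k(x_+,x_*)=r$ and the constant value of $k(x,x)$ — by $r$ and $k(x_+,x_-)$ alone. For an isotropic translation-invariant kernel this is a short geometric computation ($\theta_{r1}=r$ is pinned by the constraint, $x_{*max}$ lies on the line through $x_+$ and $x_-$, so $\theta_{r2}$ depends only on $r$ and $k(x_+,x_-)$), and it is what makes the comparison between $\phi$ at $s_\circ$ and $\phi$ at $s$ meaningful. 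A secondary point to settle is the hypothesis-checking in the first step: the standing condition is stated with the \emph{global} $x_-$ rather than with $x_-^{(\circ)}$, so one must argue that it still forces the analogous strict inequality for the pair $(x_\circ, x_-^{(\circ)})$, so that \cref{thm:msp} is genuinely applicable to each base point.
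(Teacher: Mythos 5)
Your proposal is correct and follows essentially the same route as the paper: apply Theorem~1 to each base point $x_\circ$ paired with its own nearest opposite-class training point, then use the assumed monotonicity of $\phi(r\mid\mathcal{D})$ in $k(x_+,x_-)$ together with the maximality of $s=k(x_+,x_-)$ over all cross-class pairs. The paper's own proof is a terser version of this argument --- it likewise regards $\phi$ as a function of $k(x_+,x_-)$ under translation invariance, and it does not spell out the two verification points (that $\phi$ really depends only on $r$ and the pair's kernel value, and that Theorem~1's hypotheses hold for each pair $(x_\circ,x_-^{(\circ)})$) that you correctly flag as the remaining work.
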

The schematic diagram is shown in \cref{fig:thrmspall}.
\definecolor{bb}{rgb}{0.63, 0.79, 0.95}
\definecolor{or}{rgb}{1, 0.8, 0.64}
\begin{figure}
    \centering
\begin{tikzpicture}[scale=0.4]
    
    \draw[dashed](2.6,7)--(4.6,-.5);
    \coordinate [label=above left:{O}] (origin) at (0,0);
    \coordinate [label={[blue]\large $\mathcal{D}_+$}] (bluelb) at (1.8,6.0);
    \coordinate [label={[orange]\large $\mathcal{D}_-$}] (orangelb) at (4.1,6.0);
    
    \fill [bb, rotate around={20:(1,4)}] (0.5,3) circle (2 and 3);
    \fill [or,rotate around={20:(1,4)}] (6,2) circle (2 and 3);

    \draw ([shift={(4.17,1.71)}]80:1.81) arc[radius=1.81, start angle=80, end angle= 135];
    
    \draw[very thick] (2.9,3)--(4.44,3.5);
    \fill[blue] (2.9,3) circle (0.12);
    \fill[orange] (4.44,3.5) circle (0.12);

    \coordinate [label=below:{$x_+$}] (bp) at (2.9,2.8);
    \coordinate [label=below:{$x_-$}] (op) at (4.44,3.4);
    
    \coordinate [label=above:{$s=\max{(S)}$}] (rlabel) at (4.7,3.65);

    \draw[thick, ->] (-2, 0)--(10, 0) node[right]{};
    \draw[thick, ->] (0,-0.5)--(0,7) node[above] {};
    
\end{tikzpicture}
    \caption{\small{Illustration of the conditions in Theorem \ref{thm:all} assuming the input space as $\mathbb{R}^2$. Blue and orange circles suggest the distributions of the input data points of $\mathcal{D}_+$ and $\mathcal{D}_-$ respectively. $x_+$ and $x_-$ are the nearest input data points from $\mathcal{D}_+$ and $\mathcal{D}_-$ respectively. Note that $s$ is the value of the kernel function whose inputs are the nearest points from the dataset, that is, $s=k(x_+,x_-)$.}}
    \label{fig:thrmspall}
\end{figure}
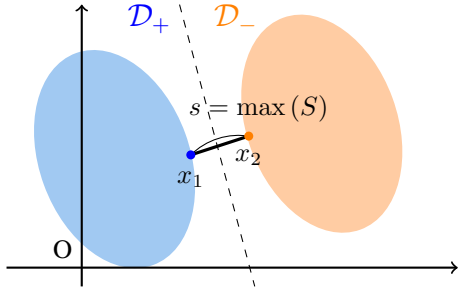

    \begin{proof}
    Where $r$ is fixed, $\phi(r|\mathcal{D})$ can be regarded as a function of $k(x_+,x_-)$ provided that the kernel function is translation invariant. Therefore, when $\phi(r|\mathcal{D})$ in \cref{thm:msp} increases monotonically with respect to $k(x_+,x_-)$, the greater value $\phi(r|\mathcal{D})$ is given by bigger $s$. From the condition, the value of the kernel function whose inputs are $x_+$ and $x_-$ is the greatest among the values of the kernel function with the pairwise chosen points from $\mathcal{D}_+$ and $\mathcal{D}_-$. Thus, the upper bound of $\Pr(\mathcal{C}(x_{+*})=-1)$ is greater than the upper bound with any $x_\circ \in \mathcal{D}_+$ and the correspondent point in $\mathcal{D}_-$ used. \qed
 \end{proof}

\subsection{Remarks on \cref{thm:all}}
\label{subs:remthm2}
\begin{itemize}
    \item \cref{thm:all} indicates that the AE crafted with the original data point more successfully when the original data point is closer to the image from another class if the kernel function meets the condition that $\phi(r|\mathcal{D})$ is monotonically increasing as a function of $k(x_+,x_-)$.
    \item If the kernel function has the condition $\phi(r|\mathcal{D})$ increases monotonically as a function of $k(x_+,x_-)$, it suggests that one of the points of the closest pair is the original data point from which the AE is crafted most successfully in GP classification with that kernel function.
\end{itemize}

\section{Experimental Results}

\label{sec:3}
For justification of \cref{assumption:1} and confirmation of whether the theorem holds for GP classification with a standard dataset, we conducted the following experiment.

\subsection{Devices}
The experiments were conducted on Ubuntu 20.04.3 LSB machine, Intel(R) Xeon(R) Gold 6140 CPU @ 2.30GHz (72 cores, 144 threads).
We used Python 3.8.10, numpy 1.23.5, and scipy 1.10.1 for the calculation.

\subsection{Materials}
We used ImageNet as a dataset. ImageNet is used under the license written on this page \url{https://www.image-net.org/download.php}.
The ImageNet data used are downloaded from \url{https://www.kaggle.com/datasets/liusha249/imagenet10}.

\subsection{Procedure}
We used the image of 10 classes from ImageNet, labeled as $0,1,\dots, 9$ respectively.
We chose samples from ImageNet, whose labels are either $a$ or $b$ ($a$ and $b$ are taken from the labels $\{0,1,\dots,9\}$ pairwisely, which results in 90 combinations). We used 500 samples for each label. 
The data used in the experiment were those with 3 colors, and randomly cropped to $224 \times 224$ pixels to adjust the image size with different aspect ratios. Therefore, the input dimension of the data was 150528.
We trained the GP regressor \cite{rasmussen_gaussian_2006} using these samples as training data with the Gaussian kernel. 
The Gaussian kernel used in the experiment is as follows:
\begin{equation}
    k(x,x') = \theta_1 \exp{\left(-\frac{||x-x'||^2}{\theta_2}\right)}
    \label{eq:gkernel}
\end{equation}
In the experiment, the adversarial perturbation was fixed to 10, and the parameters of the kernel function changed. That is, the parameters $\theta_1 = 0.1,0.5,1$ and $\theta_2 = 10,50$ were used. The randomly cropped area of each image is the same to compare the results across the conditions.

We decided on the adversarial perturbation size considering the distance between a point and the closest point of the other class (see \Cref{fig:hist}). Most points are not classified as the other class even when the adversarial perturbation of size 10 is added to the points.

We gave the objective variable -1 to the points with label $a$, and 1 to the points with label $b$.

We crafted AE as the point with the following conditions:
\begin{itemize}
    \item Each AE is crafted by adding perturbation to the original sample of label $b$, resulting in crafting 500 AEs.
    \item The perturbation is given by moving the original sample by a certain distance ($l_2$ norm) towards the nearest point whose label is $a$.
\end{itemize}
A sample of adversarial examples crafted in this experiment is shown in \cref{fig:advexp}.
\begin{figure}
    \centering
    \subfloat[][Original] {\includegraphics[width=0.4\linewidth]{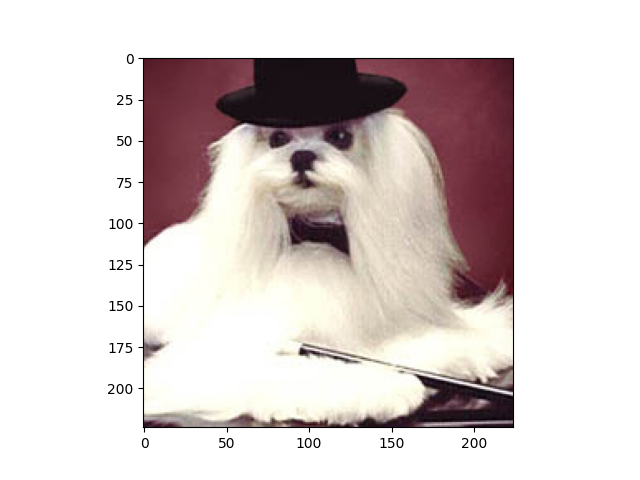}}
    \subfloat[][AE] {\includegraphics[width=0.4\linewidth]{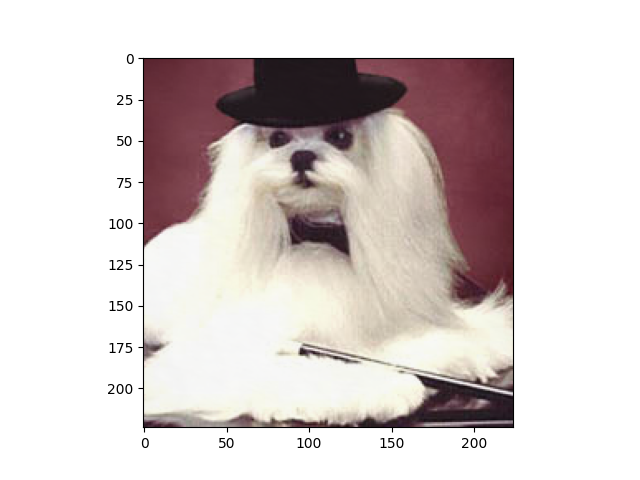}}
    \caption{\small{A sample of adversarial examples of norm=$10$ crafted in the current experiment. The adversarial perturbation is directed to the nearest point in the other class.}}
    \label{fig:advexp}
\end{figure}

Next, we calculated the predicted mean $\mu_\mathcal{R}$ and variance $\sigma^2_\mathcal{R}$ using the GP regressor, and the probability that the regression result is negative (i.e. the classification result is labeled $b$) given by
\begin{equation}
    \int_\infty^0\mathcal{N}\left(x;\mu_\mathcal{R},\sigma^2_\mathcal{R}\right)dx.
    \label{eq:emp}
\end{equation}

This is the empirical probability of the classification of an AE as the other class than the original class in the GP classifier.

Finally, we calculated the theoretical probability of classification of an AE as the other class than the original class, using $\Phi\left(-\mu/\sigma\right)$ in \cref{thm:msp} with $\epsilon = 0$.

These theoretical and empirical probabilities were calculated for all the 500 points that had the label $b$ in each condition. In each condition, the mean and maximum of the theoretical and empirical probabilities were calculated and their mean across the conditions was calculated.

\subsection{Result}
\begin{figure}[htbp]
    \centering
    \subfloat[][a=0 and b=7, \\norm=10, \\$\theta_1 = 0.1$ and $\theta_2 = 10$] {\includegraphics[width=0.3\textwidth]{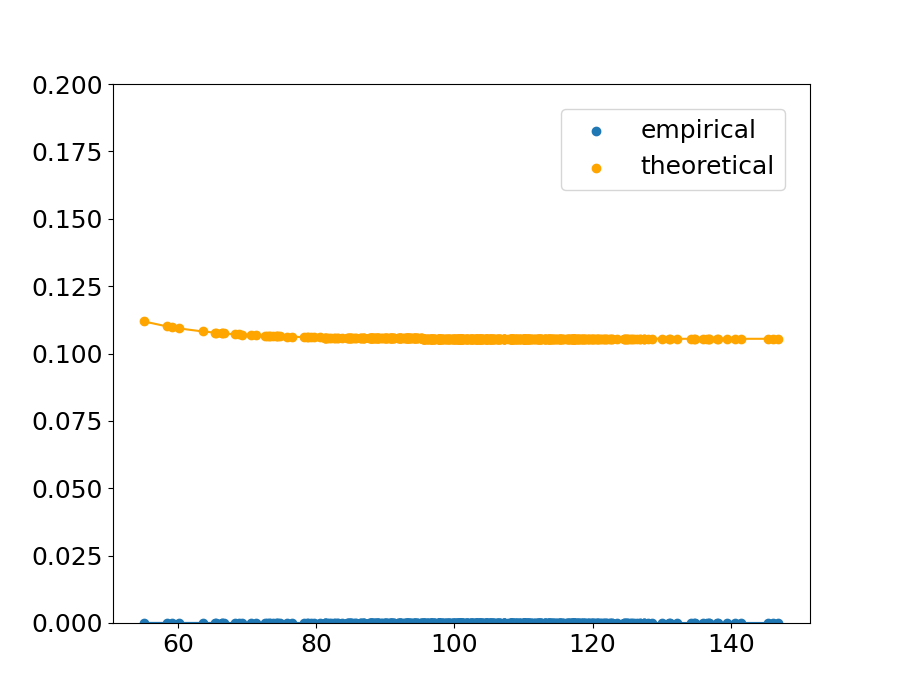}}
    \subfloat[][a=0 and b=7, \\norm=10, \\$\theta_1 = 0.5$ and $\theta_2 = 10$] {\includegraphics[width=0.3\textwidth]{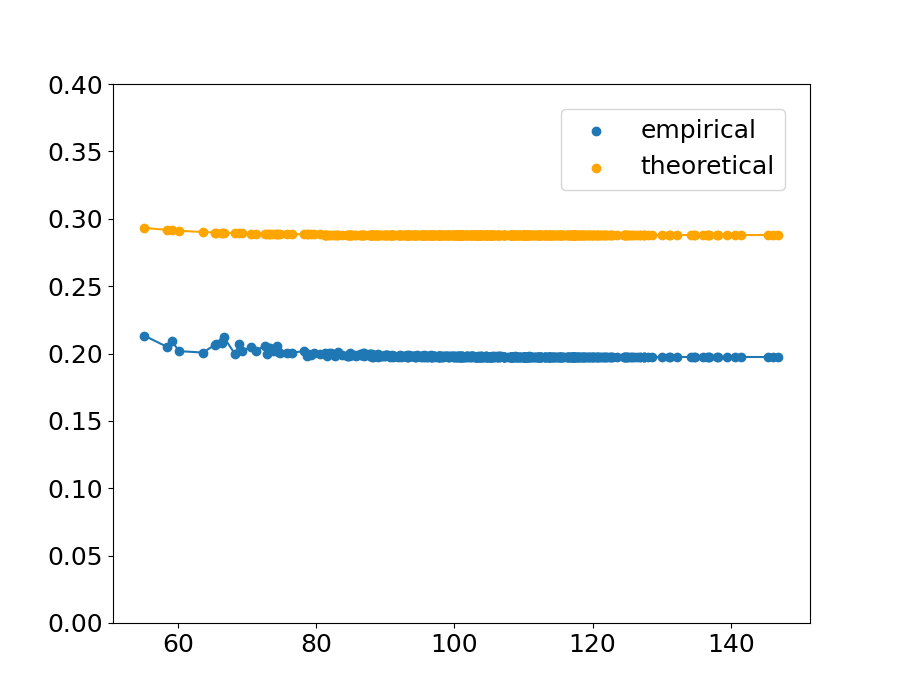}}
    \subfloat[][a=0 and b=7, \\norm=10, \\$\theta_1 = 1$ and $\theta_2 = 10$] {\includegraphics[width=0.3\textwidth]{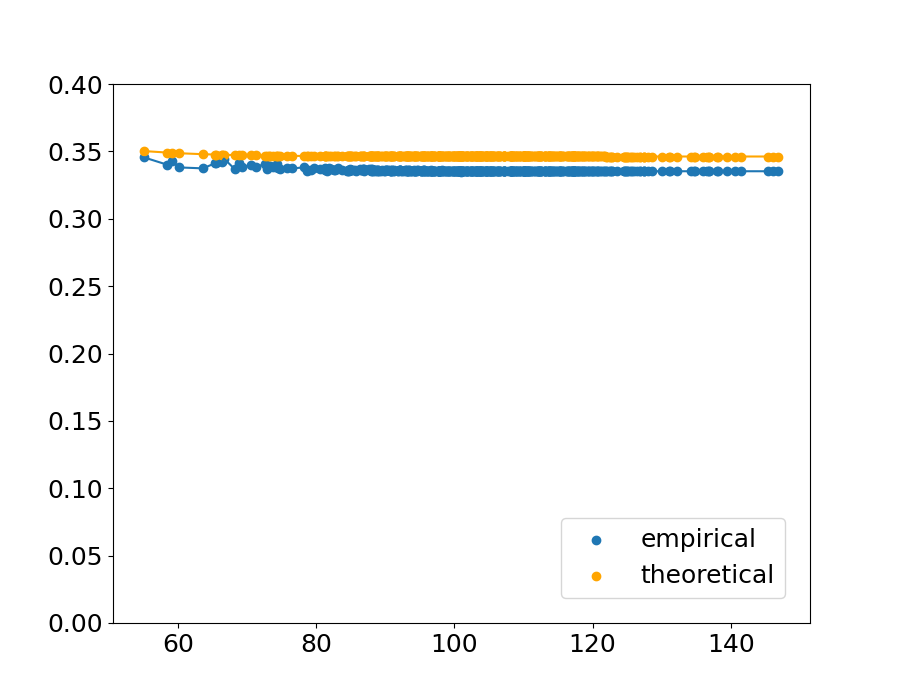}}
    
    \subfloat[][a=0 and b=7, \\norm=10, \\$\theta_1 = 0.1$ and $\theta_2 = 50$] {\includegraphics[width=0.3\textwidth]{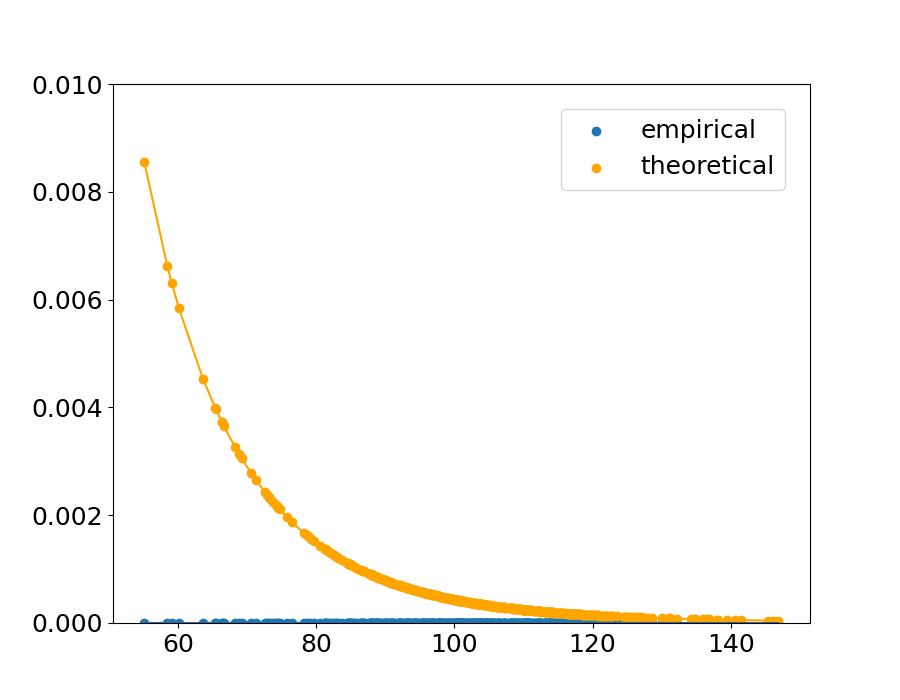}}
    \subfloat[][a=0 and b=7, \\norm=10, \\$\theta_1 = 0.5$ and $\theta_2 = 50$] {\includegraphics[width=0.3\textwidth]{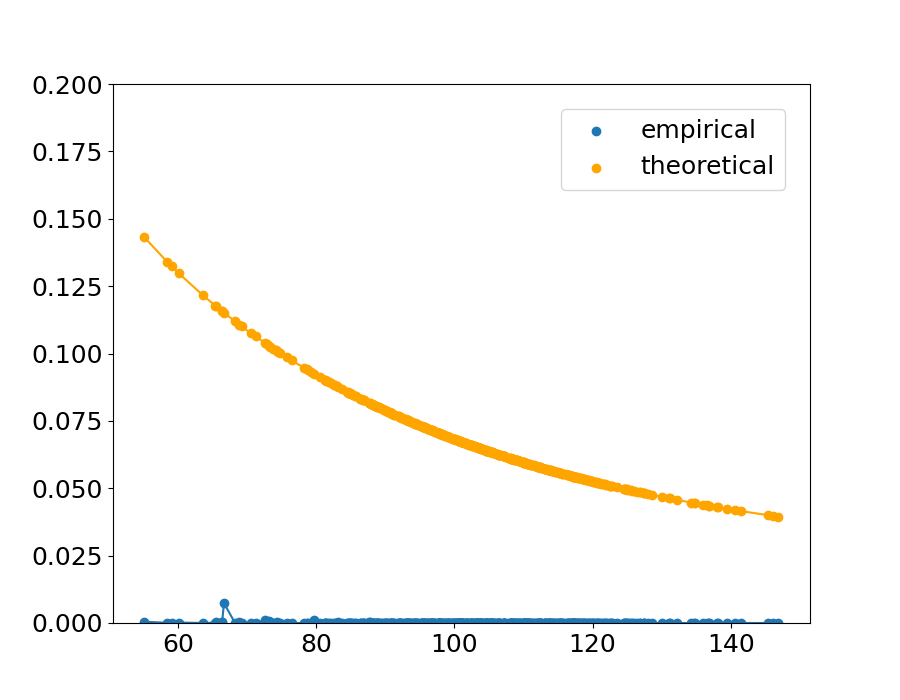}}
    \subfloat[][a=0 and b=7, \\norm=10, \\$\theta_1 = 1$ and $\theta_2 = 50$] {\includegraphics[width=0.3\textwidth]{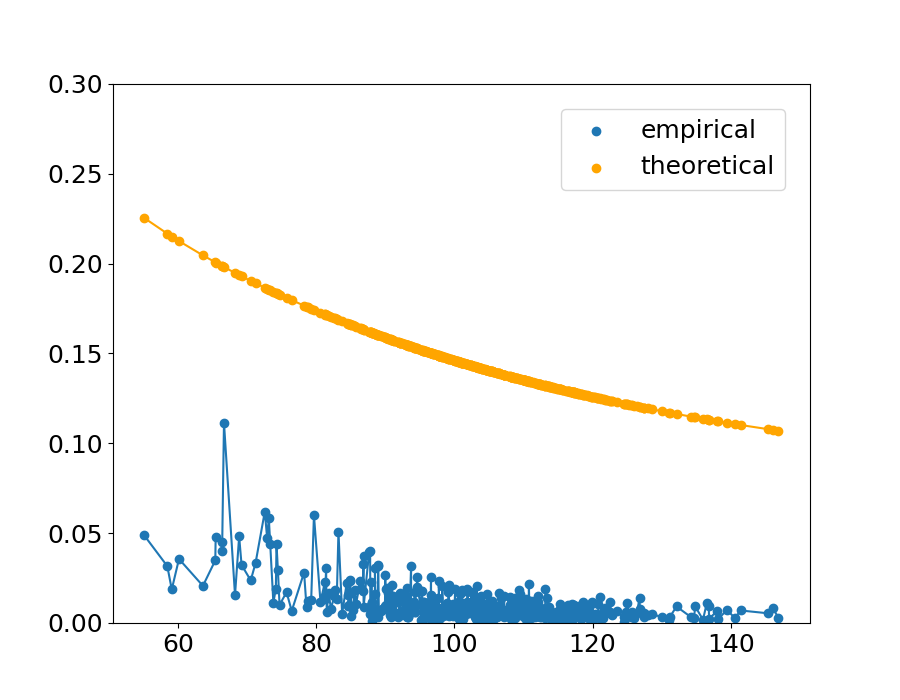}}    

    \caption{\small{The samples from the result of the experiment. The horizontal axis shows the distance between a certain point and the closest point whose label is different from that point. The vertical axis shows the empirical and theoretical probability that the point is classified as a different label from the closest point. Note that the theoretical upper bound changes according to the kernel parameter $\theta_1$ and $\theta_2$.}}
    \label{fig:expimgnet}
\end{figure}

\begin{figure}[htbp]
    \centering
    \includegraphics[width=0.4\linewidth]{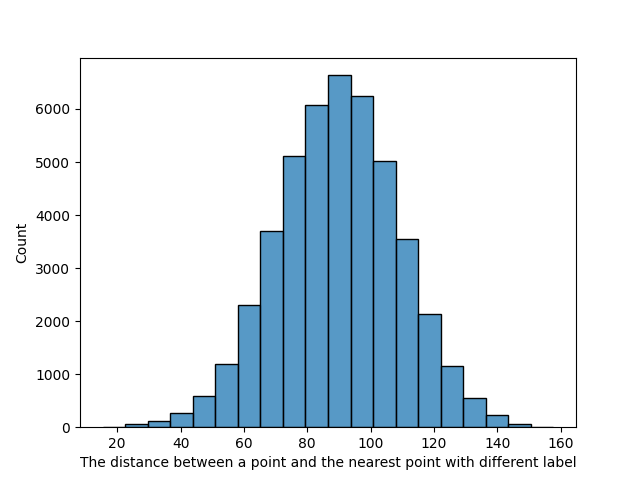}
    \caption{\small{The histogram of the distance between a point and the nearest point with a different label. The data pool is merged across the pairwise condition, so the number of the data is $45000 (= 500 * 90)$.}}
    \label{fig:hist}
\end{figure}

\cref{fig:expimgnet} shows the sample of the theoretical upper bound and the empirical value in the condition $a=0 ~\text{and}~ b=7$. The horizontal axis indicates the distance between a point and the nearest point from the other class, and the vertical axis indicates the theoretical upper bound calculated with \cref{thm:msp} and the empirical value calculated with \cref{eq:emp}. The result is shown in \cref{tab:resultexp2}, which indicates the proportion of the points that follow the theorem, and the maximum theoretical value within the $a$ and $b$ combination (the values shown in \cref{tab:resultexp2} are the means across the 90 combinations $a$ and $b$).

The analysis of the distance between the points and their nearest points that have different labels is shown in \cref{tab:distexp2}. The mean of the distance is $89.72$, and if the points are restricted to those that do not follow the theorem (that is, their theoretical values are not larger than the empirical values), the distance is smaller.
The overall histogram of the distance between a point and the nearest point with a different label is shown in \Cref{fig:hist}.

An extreme example is shown in (c) of \Cref{fig:expimgnet}. In this condition of kernel parameters, the empirical probability is high and in that condition the theoretical upper bound is tight.

The upper bound does not always hold because in this experiment, the upper bound is calculated with \cref{thm:msp} with $\epsilon=0$, while in the real data, $\epsilon$ could be positive. However, the fact that the theorem holds with high probability even if $\epsilon$ is $0$ suggests that we can empirically say that \cref{assumption:1} is satisfied in practice.

Theoretically, when the bandwidth of the Gaussian distribution in the kernel function is small, the empirical decision boundary is not affected by the distant points and is determined mainly by the closest points. In that condition, the theoretical upper bound (calculated by two nearest points) is closer to the empirical probability.

\begin{table*}[htbp]
    \centering
    \footnotesize
    \caption{\small{The proportion of points where the theoretical value is larger than the empirical value and the mean ($\pm$ standard deviation) of the theoretical upper bound of the closest pair in each condition. The change of kernel parameters $\theta_1, \theta_2$ cause the change of the theoretical upper bound.}}
    \begin{tabular}{c|cccccc}
         $\theta_1$ & 0.1 & 0.1 &0.5 & 0.5& 1 & 1 \\
         $\theta_2$ &  10 & 50 & 10 & 50 & 10 & 50\\
         \hline \hline
         \begin{tabular}{c}
         The proportion of points \\that follow the theorem
         \end{tabular}&0.9996 &0.9999&0.9991 & 0.9998 & 0.9898& 0.9997\\
         \hline
         The mean of the & 0.2095 &0.1216 & 0.3512 &0.2616 & 0.3928 & 0.3216\\
         max theoretical value &$\pm$ 0.1538 & $\pm$ 0.1927 & $\pm$ 0.0809 & $\pm$0.1291 & $\pm$0.0585 & $\pm$0.0981 \\
    \end{tabular}
    \label{tab:resultexp2}
\end{table*}

\begin{table*}
    \centering
    \footnotesize
    \caption{\small{The first row suggests the mean ($\pm$ standard deviation) of the distance between the points which doesn't follow the theorem and their nearest points from the other class. The second row suggests the mean of the distance between the points and their nearest points and their nearest points from the other class.}}
    \begin{tabular}{c|cccccc}
          $\theta_1$ & 0.1 & 0.1 & 0.5 & 0.5 & 1& 1 \\
          $\theta_2$ &  10& 50 & 10 & 50 & 10 & 50\\
         \hline \hline

        The mean distance between the points
          & 23.59 & 18.55 & 31.13 & 24.58 & 50.84& 30.13 \\
           which don't follow the theorem &$\pm$ 5.342 & $\pm$4.031 & $\pm$ 8.524 & $\pm$13.25 & $\pm$ 10.99 & $\pm$14.98 \\
          \hline
          \begin{tabular}{c}
          The mean of the distance \\
          of all points
          \end{tabular} & \multicolumn{6}{|c}{89.72} \\
    \end{tabular}
    \label{tab:distexp2}
\end{table*}

\section{Discussion}

\subsection{The Interpretation of the Theorems}
Our theorems show the fundamental limitation of the AE in GP; the success probability of AE crafted by any method cannot exceed the value of MSP function $\phi$ in \cref{thm:msp}, and the vulnerability of a training dataset to the AE is determined by the closest pair of the data points whose labels are different. 

The theorems can be applied to other inference models because GP includes some inference models. In particular, the theoretical bound can be computed for neural networks. Since neural networks with an infinite number of units in the hidden layer and Bayesian neural networks are regarded GP \cite{gal_dropout_2016,williams_computing_1996}, our results apply to neural networks, resulting in the fundamental limitation of AE in neural networks, as in previous research \cite{blaas_adversarial_2020,fawzi_adversarial_2018,shafahi_are_2019}. Compared with previous research, our result adds the viewpoint of the distance between the samples.

\subsection{The Choice of Kernel Functions}
Since the predictive mean can be written as the sum of the terms of the kernel functions whose inputs are each training point and test point (Representer Theorem in GP \cite{goos_generalized_2001}), changing the parameter of the kernel functions changes the shape of the decision boundary.

The result of the experiments in \cref{sec:3} suggests that the upper bound of the probability changes according to the choice of the kernel functions. This suggests that changing the activation function of a neural network can improve the robustness of that neural network.
The change according to the kernel parameter of the Gaussian kernel (\cref{eq:gkernel}) shown in the experiment in \cref{sec:3} can be interpreted as below:
\begin{itemize}
    \item When $\theta_1$ changes, the value of the Gaussian kernel is multiplied by $\theta_1$. With regard to $\mu$ and $\sigma$ in \cref{thm:msp}, when $\theta_1$ becomes larger, $\mu$ will remain unchanged and $\sigma$ will be larger. Considering that the \textit{MSP} function (\cref{eq:msp}) is based on the CDF of Gaussian distribution and the CDF of Gaussian distribution increases monotonically with respect to $\sigma$ when $\mu>0$, the value of \cref{eq:msp} is greater when $\theta_1$ is greater.
    \item When $\theta_2$ changes, the Gaussian distribution bandwidth in the Gaussian kernel will change. Considering the form of the exponential function, if $\theta_2$ is larger, the absolute value of the derivative of the Gaussian kernel with respect to $\lVert x-x' \rVert^2$ at the same point is smaller. This suggests that if $\theta_2$ is greater, the decay of the value of the Gaussian kernel with respect to $\lVert x-x' \rVert^2$ is slower, and the effect on the value of the Gaussian kernel by the distance of two points would be greater in the interval focused in the experiment.
\end{itemize}

Since $\phi(r|\mathcal{D})$ is not easily differentiable with respect to $k(x_1,x_2)$, the discrete calculation is required to confirm whether the kernel function satisfied the constraint in \cref{subs:remthm2} and to determine the kernel function to improve the robustness. 
However, it is reasonable that if $k(x_1,x_2)$ is small (that is, $x_1$ and $x_2$ is too far away), $x_*$, the point near $x_1$, cannot be classified as the same class as $x_2$. 

Previous research showed that the activation functions in neural networks are equivalent to the kernel functions in GP\cite{lee_deep_2018,williams_computing_1996}. Therefore, the theoretical result of this paper suggests the theoretical basis for the enhancement method that changes the activation function in neural networks. Previous studies suggest that changing the activation function in neural networks can enhance the robustness against AEs \cite{goodfellow_explaining_2015,gowal_uncovering_2021,taghanaki_kernelized_2019}. Our result shows that the change of the kernel function induces the change of the theoretical upper bound of the successful AEs, and considering that the activation function in neural networks is the counterpart of the kernel function in GP, our result suggests that the robustness enhancement method by changing the activation functions in neural networks has the same basis as our result.

\subsection{Open Problems}
This paper has the following open problems: Further investigation should solve these problems.

\begin{itemize}
    \item \cref{thm:msp} is proved under the assumption that the effect of input points other than $x_1, x_2$ on the predictive mean is less than $\epsilon$. When using the Gaussian kernel, this condition is reasonable because the value of the Gaussian kernel decreases rapidly according to the distance of the input points, and it is confirmed in the experiment that even if $\epsilon$ is set to 0, over 98\% of the points follow the theorem. The theorem under the condition of using another kernel function should be investigated further. 
    \item An experiment with neural networks should be conducted to confirm that one can design a more robust activation function using \cref{thm:msp}.
\end{itemize}

\subsection{Limitations}
The research has the following limitations.
\begin{itemize}
    \item The problem formulation of the research is binary classification, not multi-class classification. 
    \item Finding the nearest point can be time-consuming if the number of input points increases. An efficient algorithm, such as a divide-and-conquer algorithm, should be investigated for the search for the nearest point.
    \item $\epsilon$ in \cref{assumption:1} depends on the distribution of the training data, the evaluation of the size of $\epsilon$ is yet to be investigated (but as we wrote above, the fact that the theorem holds with high probability even if is 0 suggests that we can safely use the upper bound without \cref{assumption:1} in practical use).
\end{itemize}

\subsection{Conclusion}
In this paper, we showed that in the GP classification, the probability of a successful attack of AE has an upper bound. The experiment showed that the parameter of kernel functions can change the theoretical upper bound of the probability of a successful attack of AE, suggesting that the choice of kernel functions affects the robustness against AE in the GP classification. 


\small{ \subsubsection{Disclosure of Interests.} The authors have no competing interests to declare that are relevant to the content of this article.}


%
%
%
\bibliographystyle{splncs04}
\bibliography{main}

\newcommand{\etalchar}[1]{$^{#1}$}
\begin{thebibliography}{SHS{\etalchar{+}}19}

\bibitem[RW06]{rasmussen_gaussian_2006}
Carl~Edward Rasmussen and Christopher K.~I. Williams.
\newblock {\em Gaussian processes for machine learning}.
\newblock Adaptive computation and machine learning. MIT Press, Cambridge, Mass, 2006.
\newblock OCLC: ocm61285753.

\bibitem[SHS{\etalchar{+}}19]{shafahi_are_2019}
Ali Shafahi, W.~Ronny Huang, Christoph Studer, Soheil Feizi, and Tom Goldstein.
\newblock Are adversarial examples inevitable?
\newblock {\em 7th International Conference on Learning Representations}, 2019.

\end{thebibliography}


\begin{thebibliography}{10}
\providecommand{\url}[1]{\texttt{#1}}
\providecommand{\urlprefix}{URL }
\providecommand{\doi}[1]{https://doi.org/#1}

\bibitem{bhagoji_lower_2019}
Bhagoji, A.N., Cullina, D., Mittal, P.: Lower {Bounds} on {Adversarial} {Robustness} from {Optimal} {Transport}. 33rd Conference on Neural Information Processing Systems  (Oct 2019), \url{https://doi.org/10.48550/arXiv.1909.12272}

\bibitem{blaas_adversarial_2020}
Blaas, A., Patane, A., Laurenti, L., Cardelli, L., Kwiatkowska, M., Roberts, S.: Adversarial {Robustness} {Guarantees} for {Classification} with {Gaussian} {Processes}. Proceedings of the Twenty Third International Conference on Artificial Intelligence and Statistics,  \textbf{108},  3372--3382 (Mar 2020), \url{https://doi.org/10.48550/arXiv.1809.06452}

\bibitem{cai_curriculum_2018}
Cai, Q.Z., Du, M., Liu, C., Song, D.: Curriculum {Adversarial} {Training}. In: International {Joint} {Conference} on {Artificial} {Intelligence} (May 2018), \url{https://doi.org/10.48550/arXiv.1805.04807}

\bibitem{cardelli_robustness_2019}
Cardelli, L., Kwiatkowska, M., Laurenti, L., Patane, A.: Robustness {Guarantees} for {Bayesian} {Inference} with {Gaussian} {Processes}. The Thirty-Third AAAI Conference on Artificial Intelligence  (2019), \url{http://arxiv.org/abs/1809.06452}

\bibitem{carlini_adversarial_2017}
Carlini, N., Wagner, D.: Adversarial {Examples} {Are} {Not} {Easily} {Detected}: {Bypassing} {Ten} {Detection} {Methods}. Proceedings of the 10th ACM Workshop on Artificial Intelligence and Security pp. 3--14 (Nov 2017), \url{https://doi.org/10.48550/arXiv.1705.07263}

\bibitem{carlini_towards_2017}
Carlini, N., Wagner, D.: Towards {Evaluating} the {Robustness} of {Neural} {Networks}. In: 2017 {IEEE} {Symposium} on {Security} and {Privacy} ({SP}). pp. 39--57. IEEE, San Jose, CA, USA (May 2017). \doi{10.1109/SP.2017.49}

\bibitem{cohen_certified_2019}
Cohen, J.M., Rosenfeld, E., Kolter, J.Z.: Certified {Adversarial} {Robustness} via {Randomized} {Smoothing} (Jun 2019), \url{https://doi.org/10.48550/arXiv.1902.02918}, arXiv:1902.02918 [cs, stat]

\bibitem{fawzi_adversarial_2018}
Fawzi, A., Fawzi, H., Fawzi, O.: Adversarial vulnerability for any classifier. Proceedings of the 32nd International Conference on Neural Information Processing Systems. pp. 1186--1195 (Nov 2018), \url{https://doi.org/10.48550/arXiv.1802.08686}

\bibitem{feinman_detecting_2017}
Feinman, R., Curtin, R.R., Shintre, S., Gardner, A.B.: Detecting {Adversarial} {Samples} from {Artifacts}. arXiv:1703.00410 [cs, stat]  (Nov 2017), \url{https://doi.org/10.48550/arXiv.1703.00410}

\bibitem{gal_dropout_2016}
Gal, Y., Ghahramani, Z.: Dropout as a {Bayesian} {Approximation}: {Representing} {Model} {Uncertainty} in {Deep} {Learning}. Proceedings of The 33rd International Conference on Machine Learning  \textbf{48},  1050--1059 (Oct 2016), \url{https://doi.org/10.48550/arXiv.1506.02142}

\bibitem{gilmer_relationship_2018}
Gilmer, J., Metz, L., Faghri, F., Schoenholz, S.S., Raghu, M., Wattenberg, M., Goodfellow, I.: The {Relationship} {Between} {High}-{Dimensional} {Geometry} and {Adversarial} {Examples}. arXiv:1801.02774 [cs]  (Sep 2018), \url{https://doi.org/10.48550/arXiv.1801.02774}

\bibitem{goodfellow_explaining_2015}
Goodfellow, I.J., Shlens, J., Szegedy, C.: Explaining and {Harnessing} {Adversarial} {Examples}. International Conference on Learning Representations  (Mar 2015), \url{https://doi.org/10.48550/arXiv.1412.6572}

\bibitem{gowal_uncovering_2021}
Gowal, S., Qin, C., Uesato, J., Mann, T., Kohli, P.: Uncovering the {Limits} of {Adversarial} {Training} against {Norm}-{Bounded} {Adversarial} {Examples} (Mar 2021), \url{https://doi.org/10.48550/arXiv.2010.03593}, arXiv:2010.03593 [cs, stat]

\bibitem{lee_deep_2018}
Lee, J., Bahri, Y., Novak, R., Schoenholz, S.S., Pennington, J., Sohl-Dickstein, J.: Deep {Neural} {Networks} as {Gaussian} {Processes}. International Conference on Learning Representations  (Mar 2018), \url{https://doi.org/10.48550/arXiv.1711.00165}

\bibitem{mahloujifar_curse_2019}
Mahloujifar, S., Diochnos, D.I., Mahmoody, M.: The {Curse} of {Concentration} in {Robust} {Learning}: {Evasion} and {Poisoning} {Attacks} from {Concentration} of {Measure}. Proceedings of the AAAI Conference on Artificial Intelligence  \textbf{33},  4536--4543 (Jul 2019). \doi{10.1609/aaai.v33i01.33014536}

\bibitem{rasmussen_gaussian_2006}
Rasmussen, C.E., Williams, C.K.I.: Gaussian processes for machine learning. Adaptive computation and machine learning, MIT Press, Cambridge, Mass (2006), oCLC: ocm61285753

\bibitem{goos_generalized_2001}
Schölkopf, B., Herbrich, R., Smola, A.J.: A {Generalized} {Representer} {Theorem}. In: Goos, G., Hartmanis, J., Van~Leeuwen, J., Helmbold, D., Williamson, B. (eds.) Computational {Learning} {Theory}, vol.~2111, pp. 416--426. Springer Berlin Heidelberg, Berlin, Heidelberg (2001). \doi{10.1007/3-540-44581-1_27}, \url{http://link.springer.com/10.1007/3-540-44581-1_27}, series Title: Lecture Notes in Computer Science

\bibitem{shafahi_are_2019}
Shafahi, A., Huang, W.R., Studer, C., Feizi, S., Goldstein, T.: Are adversarial examples inevitable? 7th International Conference on Learning Representations  (2019), \url{https://doi.org/10.48550/arXiv.1809.02104}

\bibitem{smith_adversarial_2023}
Smith, M.T., Grosse, K., Backes, M., Álvarez, M.A.: Adversarial vulnerability bounds for {Gaussian} process classification. Machine Learning  \textbf{112}(3),  971--1009 (Mar 2023). \doi{10.1007/s10994-022-06224-6}

\bibitem{taghanaki_kernelized_2019}
Taghanaki, S.A., Abhishek, K., Azizi, S., Hamarneh, G.: A {Kernelized} {Manifold} {Mapping} to {Diminish} the {Effect} of {Adversarial} {Perturbations}. In: {IEEE}/{CVF} {Conference} on {Computer} {Vision} and {Pattern} {Recognition}. IEEE (2019), \url{https://doi.org/10.48550/arXiv.1903.01015}

\bibitem{wang_analyzing_2018}
Wang, Y., Jha, S., Chaudhuri, K.: Analyzing the {Robustness} of {Nearest} {Neighbors} to {Adversarial} {Examples}. Proceedings of the 35 th International Conference on Machine Learning p.~10 (2018). \doi{10.48550/arXiv.1706.03922}

\bibitem{williams_computing_1996}
Williams, C.: Computing with infinite networks. In: Mozer, M., Jordan, M., Petsche, T. (eds.) Advances in Neural Information Processing Systems. vol.~9. MIT Press (1996), \url{https://proceedings.neurips.cc/paper_files/paper/1996/file/ae5e3ce40e0404a45ecacaaf05e5f735-Paper.pdf}

\bibitem{zhang_understanding_2020}
Zhang, X., Chen, J., Gu, Q., Evans, D.: Understanding the intrinsic robustness of image distributions using conditional generative models. In: Chiappa, S., Calandra, R. (eds.) Proceedings of the Twenty Third International Conference on Artificial Intelligence and Statistics. Proceedings of Machine Learning Research, vol.~108, pp. 3883--3893. PMLR (26--28 Aug 2020). \doi{10.48550/arXiv.2003.00378}

\end{thebibliography}
\end{document}